\icmltitlerunning{Conservative Objective Models for Offline MBO}
\newtheorem{theorem}{Theorem}
\newtheorem{proposition}[theorem]{Proposition}
\newcommand{\x}{\mathbf{x}}
\newcommand{\data}{\mathcal{D}}
\newcommand{\bx}{\mathbf{x}}
\newcommand{\hatf}{\hat{f}}
\newcommand{\bG}{\mathbf{G}}
\newcommand{\expec}{\mathbb{E}}
\newcommand{\thetastar}{\theta^\star}
\newcommand{\lhat}{\widehat{L}}
\def\eqref#1{equation~\ref{#1}}
\def\1{\bm{1}}
\DeclareMathAlphabet{\mathsfit}{\encodingdefault}{\sfdefault}{m}{sl}
\SetMathAlphabet{\mathsfit}{bold}{\encodingdefault}{\sfdefault}{bx}{n}
\newcommand{\E}{\mathbb{E}}
\begin{document}

\twocolumn[
\icmltitle{Conservative Objective Models for Effective Offline Model-Based Optimization}

\icmlsetsymbol{equal}{*}

\begin{icmlauthorlist}
\icmlauthor{Brandon Trabucco}{equal,berkeley}
\icmlauthor{Aviral Kumar}{equal,berkeley}
\icmlauthor{Xinyang Geng}{berkeley}
\icmlauthor{Sergey Levine}{berkeley}
\end{icmlauthorlist}

\icmlaffiliation{berkeley}{Department of Electrical Engineering and Computer Sciences, University of California Berkeley}

\icmlcorrespondingauthor{Brandon Trabucco}{btrabucco@berkeley.edu}
\icmlcorrespondingauthor{Aviral Kumar}{aviralk@berkeley.edu}

\icmlkeywords{Machine Learning, ICML}

\vskip 0.3in
]

\printAffiliationsAndNotice{\icmlEqualContribution} %

\begin{abstract}
Computational design problems arise in a number of settings, from synthetic biology to computer architectures. In this paper, we aim to solve data-driven model-based optimization (MBO) problems, where the goal is to find a design input that maximizes an unknown objective function provided access to only a static dataset of prior experiments. Such data-driven optimization procedures are the only practical methods in many real-world domains where active data collection is expensive (e.g., when optimizing over proteins) or dangerous (e.g., when optimizing over aircraft designs).
Typical methods for MBO that optimize the design against a learned model suffer from distributional shift: it is easy to find a design that ``fools'' the model into predicting a high value.
To overcome this, we propose \emph{conservative objective models} (COMs), a method that learns a model of the objective function that lower bounds the actual value of the ground-truth objective on out-of-distribution inputs, and uses it for optimization. Structurally, COMs resembles adversarial training methods used to overcome adversarial examples. COMs are simple to implement, and outperform a number existing methods on a wide range of MBO problems, including optimizing protein sequences, robot morphologies, neural network weights, and superconducting materials.

\end{abstract}

\section{Introduction}

\begin{figure}[t]
    \centering
    \includegraphics[width=\linewidth]{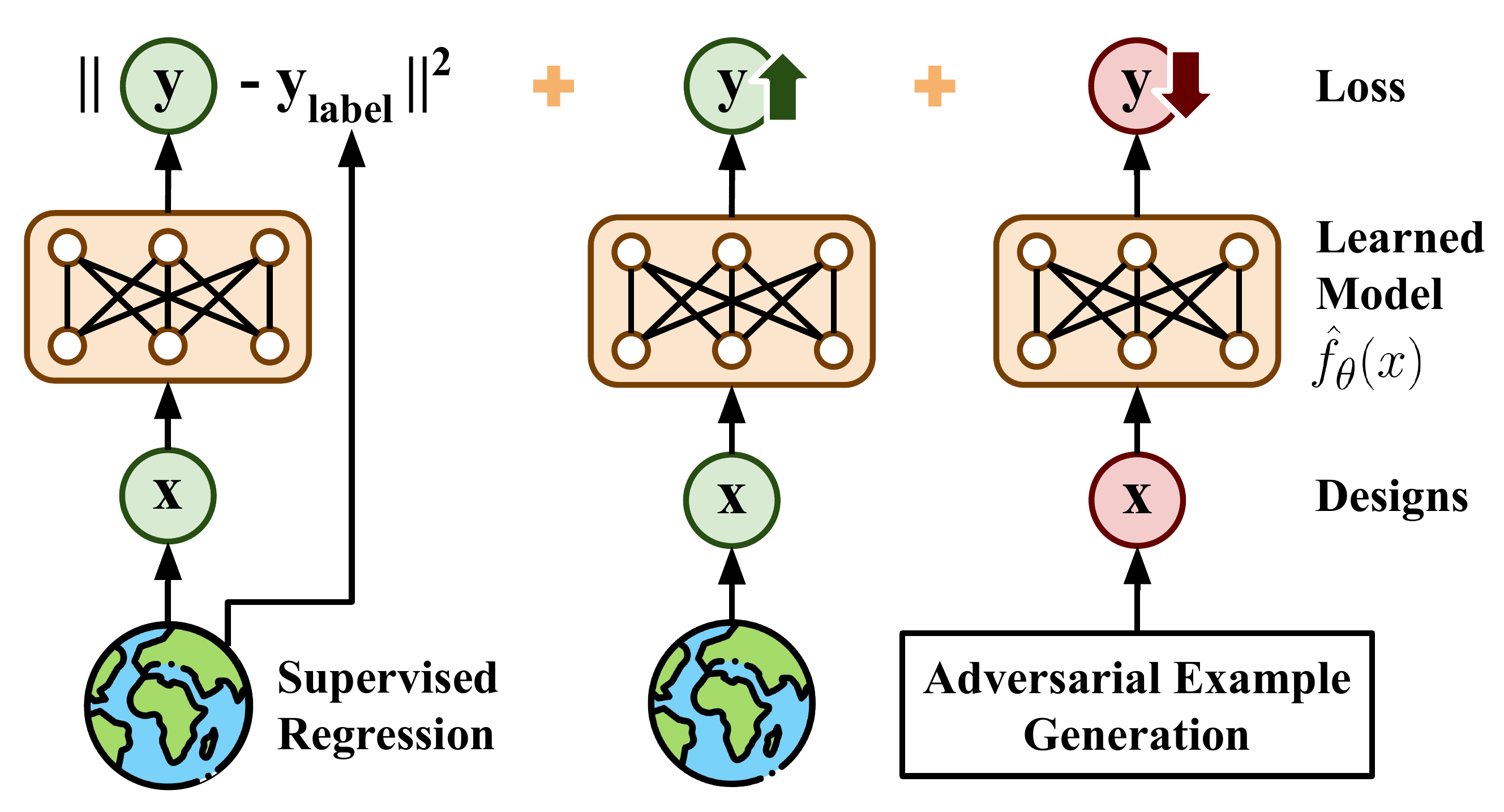}
    \vspace{-12pt}
    \caption{\textbf{Overview of COMs.} Our method trains a model of the objective function by training a neural net with supervised regression on the training data augmented two additional loss terms to obtian conservative predictions. These additional terms aim to maximize the predictions of the neural net model on the training data, and minimize the predictions on adversarially generated designs. This principle prevents the optimizer from producing bad designs with erroneously high values at unseen and poor designs.}
    \label{fig:visual_alg}
    \vspace{-20pt}
\end{figure}

Black-box model-based optimization (MBO) problems are ubiquitous in a wide range of domains, such as protein~\cite{brookes2019conditioning} or molecule design~\cite{gaulton2012chembl}, designing controllers~\cite{berkenkamp2016safe} or robot morphologies~\cite{liao2019}, optimizing neural network designs~\cite{zoph2017}, and aircraft design~\cite{hoburg2012}. Existing methods to solve such model-based optimization problems typically learn a proxy function to represent the unknown objective landscape based on the data, and then optimize the design against this learned objective function. In order to prevent errors in the learned proxy function from affecting optimization, these methods often critically rely on periodic \textit{active} data collection~\cite{snoek2012practical} over the course of training. Active data collection can be expensive or even dangerous: evaluating a real design might involve a complex real-world procedure such as synthesizing candidate protein structures for protein optimization or building the robot for robot design optimization. While these problems can potentially be solved via computer simulation, a high fidelity simulator often requires considerable effort from experts across multiple domains to build, making it impractical for most problems. Therefore, a desirable alternative approach for a broad range of MBO problems is to develop \textit{data-driven, offline} methods that can optimize designs by training highly general and expressive deep neural network models on data from previously conducted experiments, consisting of inputs ($\x$) and their corresponding objective values
($y$), without access to the true function or any form of active data collection~\cite{kumar2019model}. In a number of these practical domains, such as protein~\cite{gfp} or molecule design~\cite{gaulton2012chembl}, plenty of prior data already exists and can be utilized for fully offline, data-driven model-based optimization.

Typical approaches for addressing MBO problems learn a model of the unknown objective function $\hatf$ that maps an input $\x$ (or a representation of the input~\cite{GmezBombarelli2018AutomaticCD}) to its objective value $\hatf(\x)$ via supervised regression on the training dataset~\cite{snoek2012practical}. Then, these methods optimize the input against this learned model via, for instance, gradient ascent.
For MBO problems where the space of valid inputs forms a narrow manifold in a high-dimensional space, any overestimation errors in the learned model will erroneously drive the optimization procedure towards out-of-distribution, invalid, and low-valued design designs that ``fool'' the model into producing a high values~\cite{kumar2019model}.

How can we prevent offline MBO methods from falling into such out-of-distribution solutions? If we can instead learn a \textit{conservative} model of the objective function that does not overestimate the objective value on out-of-distribution inputs, optimizing against this conservative model would produce the best solutions for which we are \emph{confident} in the value.
In this paper, we propose a method to learn such \emph{conservative objective models} (COMs), and then optimize the design against this conservative model using a na\"ive gradient-ascent procedure. 
{Analogously to adversarial training approaches in supervised learning~\citep{goodfellow2014explaining}, and building on recent works in offline reinforcmeent learning~\citep{levine2020offline,kumar2020conservative}, COMs first explicitly mine for out-of-distribution inputs with erroneously overestimated values and then penalize the predictions on these inputs. Theoretically, we show that this approach mitigates overestimation in the learned objective model near the manifold of the dataset. Empirically, we find that this leads to good performance across a range of offline model-based optimization tasks.}

\begin{figure*}[ht!]
    \centering
    \includegraphics[width=0.85\linewidth]{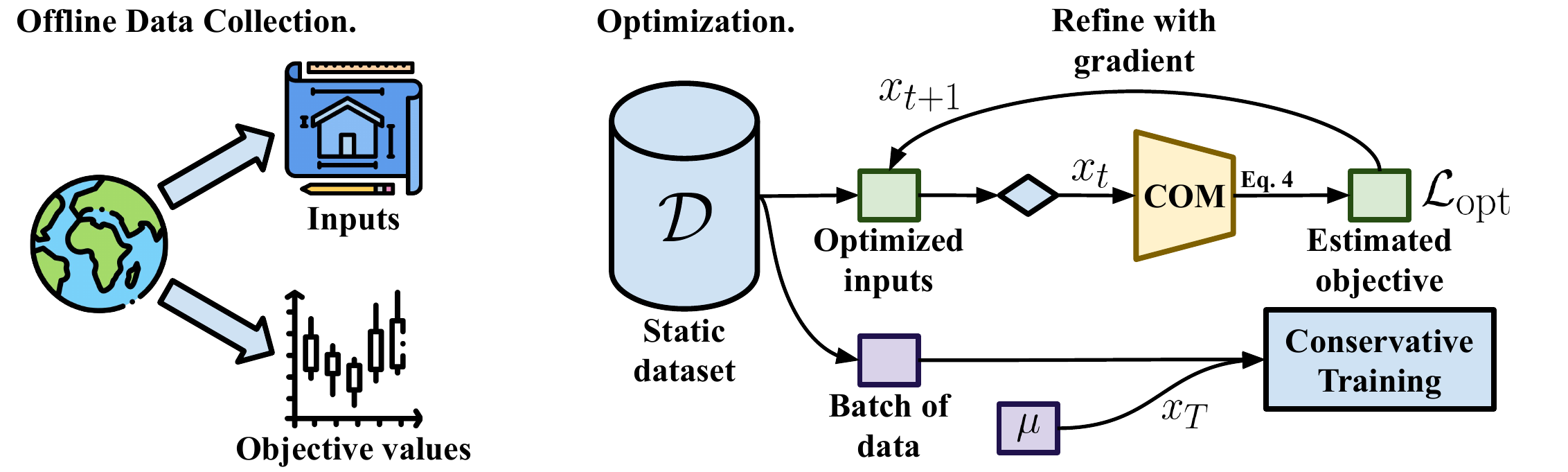}
    \caption{\textbf{Training and optimization using COMs.} The section on the left indicates that each task provides a static dataset that is collected offline without ayn MBO algorithm in-the-loop. The section on the right shows how a conservative objective model is used to produce promising optimized designs using gradient ascent, and how these designs are inputs to a conservative regularizer.}
    \label{fig:coms_diagram}
    \vspace{-10pt}
\end{figure*}

The primary contribution of this paper, COMs, is a novel approach for addressing data-driven model-based optimization problems by learning a conservative model of the unknown objective function that lower-bounds the groundtruth function on out-of-distribution inputs,
and then optimizing the input against this conservative model via a simple gradient-ascent style procedure. COMs are simple to implement, utilizing a supervised learning procedure that resembles adversarial training, without the need for complex generative modeling to estimate dataset support as in prior work on model-based optimization. We theoretically analyze COMs and show that they never overestimate the values at out-of-distribution inputs close to the dataset manifold and we empirically demonstrate the efficacy of COMs on seven complex MBO tasks that span a wide range of real-world tasks including biological sequence design, neural network parameter optimization, and superconducting material design. COMs is optimal on \textbf{4/7} tasks, and outperforms the best prior method by a factor of \textbf{1.3x} in a high-dimensional setting, and by a factor of \textbf{1.16x} overall.

\section{Preliminaries}
\label{sec:prelims}
\vspace{-5pt}

The goal in data-driven, offline model-based optimization~\cite{kumar2019model} is to find the best possible solution, $\x^*$, to optimization problems of the form
\begin{equation}
\label{eqn:forward_model}
    \x^* \leftarrow \arg \max_{\x}~ f(\x),
\end{equation}
where $f(\x)$ is an unknown (possibly stochastic) objective function. An offline MBO algorithm is provided access to a static dataset $\mathcal{D}$ of inputs and their objective values, $\mathcal{D} = \{(\mathbf{x}_1, y_1), \cdots, (\mathbf{x}_N, y_N) \}$.
While a variety of MBO methods have been developed~\cite{GmezBombarelli2018AutomaticCD,brookes2019conditioning,kumar2019model,fannjiang2020autofocused}, most methods for tackling MBO problems fit a parametric model to the samples of the true objective function in $\mathcal{D}$, $\hatf_\theta(\mathbf{x})$, via supervised training: $\hat{f}_\theta(\mathbf{x}) \leftarrow \arg \min_{\theta} \sum_{i} (\hatf_\theta(\mathbf{x}_i) - y_i)^2$, and find $\x^*$ in Equation~\ref{eqn:forward_model} by optimizing $\x$ against this learned model $\hat{f}_\theta(\mathbf{x})$, typically with some mechanism to additionally minimize distribution shift. One choice for optimizing $\x$ in Equation~\ref{eqn:forward_model} is gradient descent on the learned function, as given by
\begin{equation}
\label{eqn:gradient_ascemt}
\x_{k+1} \leftarrow \x_{k} + \eta \nabla_x \hatf_\theta(\x)\vert_{x = \x_k}, ~~~ \text{for~~} k \in [1, T], ~~~~\x^\star = \x_T.    
\end{equation}
The fixed point of the above procedure $\x_T$ is then the output of the MBO procedure. In high-dimensional input spaces, where valid $\x$ values lie on a thin manifold in a high-dimensional space, such an optimization procedure is prone to producing low-scoring inputs, which may not even be valid. This is because $\hatf$ may erroneously overestimate objective
values at out-of-distribution points,
which would naturally lead the optimization to such invalid points. Prior methods have sought to address this issue via generative modeling or explicit density estimation, so as to avoid out-of-distribution inputs. In the next section, we will describe how our method, COMs, instead trains the objective model in such a way that overestimation is prevented directly.

\vspace{-5pt}
\section{Conservative Objective Models for Offline Model-Based Optimization}
\label{sec:method}
\vspace{-5pt}
In this section, we present our approach, conservative objective models (COMs). COMs learn estimates of the true function that do not overestimate the value of the ground truth objective on out-of distribution inputs in the vicinity of the training dataset.
As a result, COMs prevent erroneous overestimation that would drive the optimizer (Equation~\ref{eqn:gradient_ascemt}) to produce out-of-distribution inputs with low values under the groundtruth objective function. We first discuss a procedure for learning such conservative estimates and then explain how these conservative models can be used for offline MBO.

\subsection{Learning Conservative Objective Models (COMs)}
The key idea behind our approach is to augment the objective for training of the objective model, $\hatf_\theta(\x)$, with a regularizer that minimizes the expected value of this function on {``adversarial'' inputs where the value of the learned function $\hatf_\theta$ may be erroneously large. Such adversarial inputs are likely to be found by the optimizer during optimization, and hence, we need to train the learned function to not overestimate their values. How can we compute such adversarial inputs? Building on simple techniques for generating adversarial examples in supervised learning~\citep{goodfellow2014explaining}, we can run multiple steps of gradient ascent on the current snapshot of the learned function $\hatf(\bx)$ starting from various inputs in the training dataset to obtain such adversarial inputs. For concise notation in the exposition, we denote the distribution of all adversarial inputs found via this gradient ascent procedure as $\mu(\x)$. Samples from $\mu(\x)$ are obtained by sampling a datapoint from the training set and running several steps of gradient ascent on $\hatf(\bx)$.} 
\begin{align}
\label{eqn:csm_mu}
    \mu(\x) = \sum_{\x_0 \in \data} \delta_{\x = \x_T}: \x_{t+1} = \x_t + \eta \nabla_\x \hatf_\theta(\x)\big\vert_{\x = \x_t}
\end{align}

{While simply minimizing the function values under this adversarial distribution $\mu(\x)$ should effectively reduce the value of the learned $\hatf$ at these inputs, this can result in systematic underestimation even for in-distribution points. To balance out this regluarization,
our approach additionally \emph{maximizes} the expected value of this function on the training dataset. This can be formalized as maximizing the value of $\hatf(\bx)$ under the empirical distribution of inputs $\bx \in \mathcal{D}$ given by:  $\hat{D}(\x) = \sum_{\x_i \in \mathcal{D}} \delta_{\x = \x_i}$.}
In Section~\ref{sec:theory}, we will show that the minimization and maximization terms balance out, and this objective learns a function $\hatf_\theta(\x)$ that is a lower bound on the true function $f(\x)$ for inputs that are encountered during the optimization process, under several assumptions. This approach is inspired by recent work in offline RL~\citep{kumar2020conservative}, where a similar objective is used to learn conservative value functions. We will elaborate on this connection in Section~\ref{sec:related}.
Formally, our
training objective is given by the following equation, where $\alpha$ is a parameter that trades off conservatism for regression:
\begin{align}
\label{eqn:csm_training}
    \hatf_\thetastar \leftarrow \arg \min_{\theta \in \Theta}~~ & \overbrace{\alpha \left( \expec_{\x \sim \mu(\x)}\left[\hatf_\theta(\x)\right] - \expec_{\x \sim \mathcal{D}}\left[\hatf_\theta(\x)\right] \right)}^{\text{COMs regularizer}} \nonumber\\& + \frac{1}{2} \underbrace{\mathbb{E}_{(\x, y) \sim \mathcal{D}}\left[ \left(\hatf_\theta(\x) - y \right)^2 \right]}_{\text{standard supervised regression}},
\end{align}

This idea is schematically depicted in Figure~\ref{fig:visual_alg}. The value of $\alpha$ and the choice of distribution $\mu(\x)$ play a crucial role in determining the behavior of this approach. 
If the chosen $\alpha$ is very small, then the resulting $\hatf_\thetastar(\x)$ may not be a conservative estimate of the actual function $f(\x)$, whereas if the chosen $\alpha$ is too large, then the learned function will be too conservative, and not allow the optimizer to deviate away from the dataset at all. We will discuss our strategy for choosing $\alpha$ in the next section. As noted earlier, our choice of $\mu(\x)$ specifically focuses on adversarial inputs that the optimizer is likely to encounter while optimizing the input. We compute this distribution $\mu(\x)$ by sampling a starting point $\x_0$ from the dataset $\data$, and then performing several steps of gradient ascent on $\hatf_\theta$ starting from this point.

\begin{figure*}
\begin{minipage}{0.53\textwidth}
\vspace{-10pt}
\begin{algorithm}[H]
\small
\centering
\caption{COM: Training Conservative Models}
\begin{algorithmic}[1]
\STATE Initialize $\hatf_\theta$. Pick $\eta, \alpha$ and initialize dataset $\data$.
    \FOR{$i = 1$ to training\_steps}
        \STATE Sample $(\x_0, y) \sim \mathcal{D}$
        \STATE Find $\x_T(\x_0)$ via gradient ascent from $\x_0$:\\ ~~ $\x_{t+1} = \x_t + \eta \nabla_\x \hatf_\theta(\x)\big\vert_{\x = \x_t}$; ~~$\mu(\x) = \sum_{\x_0 \in \data} \delta_{\x = \x_T(\x_0)}$.
        \STATE Minimize $\mathcal{L}(\theta; 
        \alpha)$ with respect to $\theta$.\\ $\mathcal{L}(\theta; \alpha)\!=\!\E_{\x_0 \sim \data}( \hatf_{\theta}(\x_0)\!-\!y )^{2}\!-\!\alpha \E_{\x_0}[\hatf_{\theta}(\x_0)] + \alpha \E_{\mu(\x)}[\hatf_{\theta}(\x)]$ \\
        $\theta \leftarrow \theta - \lambda \nabla_{\theta} \mathcal{L}(\theta; 
        \alpha)$ 
    \ENDFOR
\end{algorithmic}
\label{alg:training}
\vspace{1pt}
\end{algorithm}
\end{minipage}
~\vline~
\begin{minipage}{0.44\textwidth}
\vspace{-10pt}
\begin{algorithm}[H]
\centering
\small
\caption{COM: Finding $\x^\star$}
\begin{algorithmic}[1]
\STATE Initialize optimizer at the optimum in $\mathcal{D}$:\\ ~~~~~ $\tilde \x = \arg\max_{(\x, y) \in \mathcal{D}} ~y$
\STATE Find $\x^\star$ via gradient ascent from $\tilde \x$:\\ ~~~~~$\x_{t+1} = \x_t + \eta \nabla_\x \mathcal{L}_{\text{opt}}(\x)\big\vert_{\x = \x_t}$\\ 
~~~~$\text{where}~~ \mathcal{L}_{\text{opt}}(\x) := \hatf_\thetastar(\x)$
\STATE Return the solution $\x^\star = \x_{T}$.
\end{algorithmic}
\label{alg:evaluation}
\end{algorithm}
\end{minipage}
\end{figure*}

\subsection{Optimizing a Conservative Objective Model}
Once we have a trained conservative model from Equation~\ref{eqn:csm_training}, we must use this learned model for finding the best possible input, $\x^\star$. Prior works~\cite{kumar2019model,brookes2019conditioning} use a standard (non-conservative) model of the objective function in conjunction with generative models or density estimators to restrict the optimization to in-distribution values of $\x^\star$. However, since our conservative training method trains $\hatf_\thetastar$ to explicitly assign low values to out-of-distribution inputs, we can use a simple gradient-ascent style procedure in the input space to find the best possible solution.

Specifically, our optimizer runs gradient-ascent for $T$ iterations starting from an input in the dataset ($\x_0 \in \data$), in each iteration trying to move the design in the direction of the gradient of the learned model $\hatf_\thetastar$. 
Starting from the best point in the dataset, $\x_0 \in \mathcal{D}$, our optimizer performs the following update (also shown in Algorithm~\ref{alg:evaluation}, Line 2):
\begin{align}
    &\forall~ t \in [T], \x_0 \in \mathcal{D};~~~~ \x_{t+1} = \x_t + \eta \nabla_\x \mathcal{L}_{\text{opt}}(\x)\big\vert_{\x = \x_t} \nonumber\\ 
    &\text{where}~~ \mathcal{L}_{\text{opt}}(\x) := \hatf_\thetastar(\x). 
\label{eqn:csm_optimization_constraint}
\end{align}
Equation~\ref{eqn:csm_optimization_constraint} ensures that the value of the learned function $\hatf_\theta(\x_{t+1})$ is larger than the value at its previous iterate $\x_t$. Furthermore, the number of iterations $T$ of gradient ascent during optimization in Equation~\ref{eqn:csm_optimization_constraint} is the identical to the number of steps that we use to generate adversarial examples, $\mu(\x)$ in Equation~\ref{eqn:csm_mu}. This ensures that the optimizer only queries the region when the learned function $\hatf_\theta(\x)$ is indeed conservative and a valid lower bound.

\subsection{Using COMs for MBO: Additional Decisions}
\label{sec:method_addl}

Next we discuss other design decisions that appear in COMs training (Equation~\ref{eqn:csm_training}) or when optimizing the input against a learned conservative model (Equation~\ref{eqn:csm_optimization_constraint}).

\underline{\textbf{Choosing $\alpha$.}} The hyperparameter $\alpha$ in Equation~\ref{eqn:csm_training} plays an important role in weighting conservatism against accuracy. Without access to additional active data collection for evaluation, tuning this hyperparameter for each task can be challenging. Therefore, in order to turn COMs into a task-agnostic algorithm for offline MBO, we devise an automated procedure for selecting $\alpha$. As discussed previously, if $\alpha$ is too large, $\hatf_\thetastar$ is expected to be too conservative, since it would assign higher values to points in the dataset, and low values to \emph{all} other points. 
Selecting a single value of $\alpha$ that works for many problems is difficult, since its effect depends strongly on the magnitude of the objective function. Instead, we use a modified training procedure that poses Equation~\ref{eqn:csm_training} as a constrained optimization problem, with $\alpha$ assuming the role of a Lagrange dual variable for satisfying a constraint that controls the difference in values of the learned objective under $\mu(\x)$ and $\mathcal{D}(\x)$. This corresponds to solving the following optimization problem:  
\begin{multline}
\label{eqn:csm_training_lagrange}
    \hatf_\thetastar \leftarrow \arg \min_{\theta \in \Theta}~~ \frac{1}{2} \mathbb{E}_{(\x, y) \sim \mathcal{D}}\left[ \left(\hatf_\theta(\x) - y \right)^2 \right]\\
    ~~ \text{s.t.}~~ \left( \expec_{\x \sim \mu(\x)}\left[\hatf_\theta(\x)\right] - \expec_{\x \sim \mathcal{D}}\left[\hatf_\theta(\x)\right] \right) \leq \tau.
\end{multline}
While Equation~\ref{eqn:csm_training_lagrange} introduces a new hyperparameter $\tau$ in place of $\alpha$, this parameter is easier to select by hand, since its optimal value does not depend on the magnitude of the objective function as we can normalize the objective values to the same range before use in Equation~\ref{eqn:csm_training_lagrange}, and therefore, a single choice works well across a diverse range of tasks. We find that a single value of $\tau$ is effective on every continuous task ($\tau=0.5$) and discrete task ($\tau=2.0$) respectively, and empirically ablate the choice of $\tau$ in Figure~\ref{fig:gradient_ascent}.

\underline{\textbf{Selecting optimized designs $\x^\star$.}} So far we have discussed how COMs can be trained and used for optimization; however, we have not established a way to determine which $\x_t$ (Equation~\ref{eqn:csm_optimization_constraint}) encountered in the optimization trajectory should be used as our final solution $\x^\star$. The most natural choice is to pick the final $\x_T$ found by the optimizer as the solution. We uniformly choose $T=50$ steps. While the choice of $T$ should, in principle, affect the solution found by any gradient-ascent style optimizer, we found COMs to be quite stable to different values of $T$, as we will elaborate empirically on in Section~\ref{sec:ablations}, Figure~\ref{fig:gradient_ascent}. Of course, there are many other possible ways of selecting $T$, including ideas inspired from offline model-selection methods in offline reinforcement learning~\citep{thomas2015high}, but our simple procedure, which is also popular in offline RL~\citep{fu2020d4rl}, ensures that the optimizer only queries the regions of the input space where the learned function is indeed trained to be conservative and is also sufficient to obtain good optimization performance.

\subsection{Overall Algorithm and Practical Implementation} 
\label{sec:method_overall}
Finally, we combine the individual components discussed so far to obtain a complete algorithm for offline model-based optimization. Pseudocode for our algorithm is shown in Algorithm~\ref{alg:training}. COMs parameterize the objective model, $\hatf_\theta(\x)$, via a feed-forward neural network with parameters $\theta$. Our method then alternates between approximately generating samples $\mu(\x)$ via gradient ascent (Line 4), and optimizing parameters $\theta$ using Equation~\ref{eqn:csm_optimization_constraint} (Line 5). 
Finally, at the end of training, we run the gradient ascent procedure over the learned objective model $\hatf_\thetastar(\x)$ for a large $T$ number of ascent steps and return the final design $\x_T$ as $\x^\star$.

\underline{\textbf{Implementation details.}} Full implementation details for our method can be found in Appendix~\ref{app:method_details}. Briefly, for all of our experiments, the conservative objective model $\hatf_\theta$ is modeled as a neural network with two hidden layers of size 2048 each and leaky ReLU activations. 
More details on the network structure can be found in Appendix~\ref{appendix:network}. In order to train this conservative objective model, we use the Adam optimizer~\cite{kingma2014adam} with a learning rate of $10^{-3}$. 
Empirically, we found that if $\eta$ is too large, gradient ascent begins to produce inputs $\x_T$ that do not maximize the values of $\hatf_\thetastar(\x_T)$, so we select the largest $\eta$ such that successive $\x_t$ follow the gradient vector field of $\hatf_\thetastar(\x_t)$. For computing samples $\mu(\x)$, we used 50 gradient ascent steps starting from a given design in the dataset, $\x_0 \in \data$. 
During optimization, we used the gradient-ascent optimizer with a learning rate of 0.05 for continuous tasks and 2.0 for discrete tasks. As we will also show in our experiments (Section~\ref{sec:ablations}), this produces stable optimization behavior for all tasks we attempted. Finally, in order to choose the step $T$ in Equation~\ref{eqn:csm_optimization_constraint}  that is supposed to provide us with the final solution $\x^\star = \x_T$, we pick a universal step of $T=50$.

\section{Theoretical Analysis of COMs}
\label{sec:theory}
We will now theoretically analyze conservative objective models, and show that the conservative training procedure (Equation~\ref{eqn:csm_training}) indeed learns a conservative model of the objective function. To do so, we will show that under Equation~\ref{eqn:csm_training}, the values of all inputs in regions found within $T$ steps of gradient ascent starting from any input $\x_0 \in \mathcal{D}$ are lower-bounds on their actual value. For analysis, we will denote $\overline{\data}(\x)$ as the smoothed density of $\x$ in the dataset $\data$ (see Appendix~\ref{appendix:theorem_proof} for a formal definition).
We will express Equation~\ref{eqn:csm_training} in an equivalent form that factorizes the distribution $\mu(\x)$ as $\mu(\x) = \sum_{\x_0 \sim \data} \overline{\data}(\x_0) \mu(\x_T|\x_0)$:
\begin{multline}
\label{eqn:csm_analysis}
\min_{\theta \in \Theta}~ \alpha \left( \expec_{\textcolor{red}{\x_0 \sim \overline{\mathcal{D}}, \x_T \sim \mu(\x_T|\x_0)}}\left[\hatf_\theta(\x_T)\right] - \expec_{\x \sim \overline{\mathcal{D}}}\left[\hatf_\theta(\x)\right] \right) \\ + \frac{1}{2} \mathbb{E}_{(\x, y) \sim \overline{\mathcal{D}}}\left[ \left(\hatf_\theta(\x) - y \right)^2 \right].
\end{multline}
While $\mu(\x_T|\x_0)$ is a Dirac-delta distribution in practice (Section~\ref{sec:method}), for our analysis, we will assume that it is a distribution centered at $\x_T$ and $\mu(\x_T|\x_0) > 0~ \forall~ \x_T \in \mathcal{X}$. This condition can be easily satisfied by adding random noise during gradient ascent while computing $\x_T$. We will train $\hat{f}_\theta$ using gradient descent and denote $k=1,2,\cdots$ as the iterations of this training procedure for $\hat{f}_\theta$.

We first summarize some assumptions used in our analysis. We assume that the true function $f(\x)$ is $L$-Lipschitz over the input space $\x$. We also assume that the learned function $\hat{f}_\theta(\x)$ is $\lhat$-Lipschitz and $\lhat$ is sufficiently larger than $L$. 
For analysis purposes, we will define a conditional distribution, $\overline{\data}(\x'|\x)$, to be a Gaussian distribution centered at $\x$: $\mathcal{N}(\x'| \x, \sigma^2)$.
We will not assume a specific parameterization for the objective model, $\hat{f}_\theta$, but operate under the neural tangent kernel (NTK)~\citep{jacot2018neural} model of neural nets. The neural tangent kernel of the function $\hat{f}(\x)$ be defined as: $\bG_f(\x_i, \x_j) := \nabla_\theta \hat{f}_\theta(\x_i)^T \nabla_\theta \hat{f}_\theta(\x_j)$. Under these assumptions, we build on the analysis of conservative Q-learning~\citep{kumar2020conservative} to prove our theoretical result in Theorem~\ref{thm:lower_bound}, shown below:

\begin{proposition}[Conservative training lower-bounds the true function]
\label{thm:lower_bound}
Assume that $\hat{f}_\theta(\x)$ is trained with conservative training by performing gradient descent on $\theta$ with respect to the objective in Equation~\ref{eqn:csm_analysis} with a learning rate $\eta$. The parameters in step $k$ of gradient descent are denoted by $\theta^k$, and let the corresponding conservative model be denoted as $\hat{f}^k_\theta$. Let $\bG$, $\mu$, $\lhat$, $L$, $\overline{\data}$ be defined as discussed above. Then, under assumptions listed above, $\forall~ \x \in \data, \x'' \in \mathcal{X}$, the conservative model at iteration $k+1$ of training satisfies:
\begin{align*}
    \hat{f}_\theta^{k+1}(\x'') := \max~ &\Big\{\hat{{f}}^{k+1}_\theta(\x) - \lhat ||\x'' - \x||_2,\\
    &\tilde{f}_\theta^{k+1}(\x'') - \eta \alpha \E_{\x \sim \overline{\data}, \x' \sim \mu}[\bG^k_f(\x'', \x')]\\ &+ \eta \alpha \E_{\x \sim \overline{\data}, \x' \sim \overline{\data}} [\bG^k_f(\x'', \x')]\Big\},
\end{align*}
where $\tilde{f}_\theta^{k+1}(\x'')$ is the resulting $(k\!+\!1)$-th iterate of $\hat{f}_\theta$ if conservative training were not used. Thus, if $\alpha$ is sufficiently large, the expected value of the asymptotic function, $\hat{f}_\theta := \lim_{k \rightarrow \infty} \hat{f}_\theta^k$, on inputs $\x_T$ found by the optimizer, lower-bounds the value of the true function $f(\x_T)$: $$\E_{\x_0 \sim \data, \x_T \sim \mu(\x_T|\x_0)}[\hat{f}_\theta(\x_T)] \leq \E_{\x_0 \sim \data, \x_T \sim \mu(\x_T|\x_0)}[f(\x)].$$ 
\end{proposition}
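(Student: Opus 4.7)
The plan is to adapt the NTK-based lower-bound argument of conservative Q-learning (CQL, Kumar et al.\ 2020) to the supervised objective in Eq.~\ref{eqn:csm_analysis}. Under the neural tangent kernel model, one step of gradient descent with learning rate $\eta$ on a linear loss $\sum_i c_i \hatf_\theta(\x_i)$ produces, to leading order, the functional update $\hatf^{k+1}_\theta(\x'') = \hatf^{k}_\theta(\x'') - \eta \sum_i c_i \bG_f^k(\x'', \x_i)$, and an analogous update for the squared-loss regression term. I would begin by writing down this NTK update for the full loss in Eq.~\ref{eqn:csm_analysis}, isolating the contribution of the supervised regression term from the two linear COMs regularizer terms.

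Next, I would define $\tildef^{k+1}_\theta(\x'')$ as the iterate one would obtain from the regression term alone (i.e., with $\alpha = 0$); by construction $\tildef^{k+1}_\theta$ is the ``non-conservative'' NTK gradient step. The remaining two linear terms contribute exactly
\begin{equation*}
-\,\eta\alpha\,\E_{\x_0 \sim \overline{\data},\, \x' \sim \mu(\cdot\,|\,\x_0)}\!\left[\bG_f^k(\x'', \x')\right] \;+\; \eta\alpha\,\E_{\x \sim \overline{\data}}\!\left[\bG_f^k(\x'', \x)\right],
\end{equation*}
which yields the second entry of the $\max$ in the claim. The first entry, $\hatf^{k+1}_\theta(\x) - \lhat\,\|\x''-\x\|_2$, is simply the $\lhat$-Lipschitz lower bound applied to the new iterate at an arbitrary dataset anchor $\x\in\data$. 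Taking the pointwise maximum of the two bounds establishes the functional inequality stated in the proposition.

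To derive the expectation-level bound, I would iterate the per-step update to the fixed point $\hatf_\theta = \lim_{k\to\infty} \hatf_\theta^k$, apply the expectation $\E_{\x_0 \sim \data,\, \x_T \sim \mu(\x_T | \x_0)}[\,\cdot\,]$ to both sides, and show that the regularizer contribution is non-positive in expectation: specifically, that
\begin{equation*}
\E_{\x_T \sim \mu,\, \x' \sim \mu}\!\left[\bG_f(\x_T,\x')\right] \;\geq\; \E_{\x_T \sim \mu,\, \x' \sim \overline{\data}}\!\left[\bG_f(\x_T,\x')\right],
\end{equation*}
with strict inequality whenever $\mu \neq \overline{\data}$, because the NTK is a positive-definite kernel and $\mu(\x_T|\x_0)>0$ on all of $\mathcal{X}$. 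Combined with the fact that $\tildef_\theta$ at convergence matches the true $f$ on the training distribution up to an $L$-Lipschitz extension error (controlled by the smoothing radius $\sigma$ of $\overline{\data}$ and the gap $\lhat - L$), choosing $\alpha$ larger than the resulting threshold drives $\E[\hatf_\theta(\x_T)]$ strictly below $\E[\tildef_\theta(\x_T)] \leq \E[f(\x_T)]$, yielding the desired lower-bound inequality.

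The main obstacle I anticipate is controlling the NTK approximation over the whole training trajectory: the kernel $\bG_f^k$ drifts as $\theta$ evolves, so one must either invoke the infinite-width regime where $\bG_f^k$ is essentially constant, or track the drift as an error term that is absorbed into the ``sufficiently large $\alpha$'' quantifier, exactly as in the CQL analysis. A secondary difficulty is making rigorous the positive-definite separation of the two kernel expectations above when $\mu$ is a Dirac distribution; this is why the statement assumes $\mu(\x_T|\x_0) > 0$ everywhere (achievable by adding Gaussian noise during gradient ascent), so that the quadratic form $\iint \bG_f(\x,\x')\,(\mu - \overline{\data})(\x)\,(\mu - \overline{\data})(\x')\,d\x\,d\x'$ is strictly positive. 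Once these two technical points are secured, the remainder of the argument is bookkeeping analogous to Theorem~3.2 of Kumar et al.~(2020).
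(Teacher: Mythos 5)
Your route is essentially the paper's: an NTK one-step linearization of the loss in Equation~\ref{eqn:csm_analysis}, splitting the update into the regression-only iterate $\tildef^{k+1}_\theta$ plus the correction $-\eta\alpha\,\E_{\overline{\data},\mu}[\bG^k_f(\x'',\cdot)]+\eta\alpha\,\E_{\overline{\data},\overline{\data}}[\bG^k_f(\x'',\cdot)]$, taking the first branch of the $\max$ from the $\lhat$-Lipschitz bound, and then arguing the expected correction under $\mu$ is nonnegative so that accumulating it over iterations (with $\alpha$ large) yields the lower bound. That is exactly the decomposition in the paper's Appendix~\ref{appendix:theorem_proof}.

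There is, however, one concrete gap in your justification of the positivity step. You claim that
$\E_{\x_T\sim\mu,\x'\sim\mu}[\bG_f(\x_T,\x')]\geq\E_{\x_T\sim\mu,\x'\sim\overline{\data}}[\bG_f(\x_T,\x')]$
follows from positive definiteness of the kernel via the quadratic form $\iint\bG_f(\x,\x')(\mu-\overline{\data})(\x)(\mu-\overline{\data})(\x')\,d\x\,d\x'\geq 0$. That quadratic form only gives the symmetrized inequality $\E_{\mu,\mu}[\bG_f]+\E_{\overline{\data},\overline{\data}}[\bG_f]\geq 2\E_{\mu,\overline{\data}}[\bG_f]$, which does not imply the one-sided inequality you need: for instance, with two atoms, $\bG_f=\bigl(\begin{smallmatrix}1&2\\2&5\end{smallmatrix}\bigr)$ (positive definite), $\mu=\delta_1$ and $\overline{\data}=\delta_2$, one gets $\E_{\mu,\mu}[\bG_f]=1<2=\E_{\mu,\overline{\data}}[\bG_f]$. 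The paper does not prove this step from PSD-ness alone either; it maps the integral $\int[\mu-\overline{\data}]\,\mu\,\bG^k_f$ onto the structure of Equation~14 and Theorem~D.1 of Kumar et al.\ (2020), whose proof relies on a chi-square-divergence-type argument (weighting by inverse data density) together with specific conditions on the kernel matrix, not just semi-positive-definiteness. To close your argument you would need to import those conditions explicitly rather than appeal to the raw quadratic form. A second, smaller imprecision: your final chain asserts $\E_{\mu}[\tildef_\theta(\x_T)]\leq\E_{\mu}[f(\x_T)]$, but the non-conservative model can overestimate on $\mu$-distributed (out-of-distribution) points --- indeed that is the paper's motivating failure mode; the correct statement is that the accumulated negative correction, scaled by a sufficiently large $\alpha$, dominates whatever overestimation $\tildef_\theta$ incurs, which is how the paper phrases the conclusion.
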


A proof for Proposition~\ref{thm:lower_bound} including a complete formal statement can be found in Appendix~\ref{appendix:theorem_proof}. The intuition behind the proof is that inducing conservatism in the function $\hat{f}_\theta$ at each gradient step of optimizing Equation~\ref{eqn:csm_analysis} makes the asymptotic function be conservative. Moreover, the larger the value of $\alpha$, the more conservative the function $\hat{f}_\theta$ is on points $\x'$ found via gradient ascent, i.e., points with high density under $\mu(\x_T|\x_0)$, in expectation. Finally, when gradient ascent is used to find $\x^\star$ on the learned conservative model, $\hat{f}_\theta$, and the number of steps of gradient ascent steps is less than $T$, as we do in practice via Equation~\ref{eqn:csm_optimization_constraint}, this bound with additional offset will hold for the point $\x^\star$ in expectation, and therefore the estimated value of this point will not overestimate its true value. 
This additional offset depends on the Lipschitz constant $\lhat$ and the distance between $\x^*$ and the the optimized solutions $\x_T$ found for other data points, $\x_0 \in \mathcal{D}$.

\begin{table*}[t]
    \centering
    \setlength\tabcolsep{4pt}
    \begin{tabular}{l||r|r|r|r|>{\columncolor[gray]{0.9}}r}
    \toprule
    {}            &                        \textbf{GFP} &         \textbf{TF Bind 8} &                        \textbf{UTR} & \textbf{$\#$ Optimal} & \textbf{Norm. avg. perf.}  \\
    \midrule
    $\mathcal{D}$ (\textbf{best}) & 0.789 & 0.439 & 0.593 & & \\
    Auto. CbAS    &          \textbf{0.865 $\pm$ 0.000} & 0.910 $\pm$ 0.044 &          0.691 $\pm$ 0.012 &                      1 / 7 &  0.687 \\
    CbAS          &          \textbf{0.865 $\pm$ 0.000} & 0.927 $\pm$ 0.051 &          \textbf{0.694 $\pm$ 0.010} &                   3 / 7 &    0.699\\
    BO-qEI        &          0.254 $\pm$ 0.352 & 0.798 $\pm$ 0.083 &          0.684 $\pm$ 0.000 &                     0 / 7 &  0.629  \\
    CMA-ES        &          0.054 $\pm$ 0.002 & 0.953 $\pm$ 0.022 &          \textbf{0.707 $\pm$ 0.014} &             2 / 7 & 0.674  \\
    Grad.         &          0.864 $\pm$ 0.001 & \textbf{0.977 $\pm$ 0.025} &          \textbf{0.695 $\pm$ 0.013} &             3 / 7 & 0.750   \\
    Grad. Min     &          0.864 $\pm$ 0.000 & \textbf{0.984 $\pm$ 0.012} &          \textbf{0.696 $\pm$ 0.009} &             3 / 7  & 0.829      \\
    Grad. Mean     &          0.864 $\pm$ 0.000 & \textbf{0.986 $\pm$ 0.012} &          0.693 $\pm$ 0.010 &            2 / 7 & 0.852           \\
    MINs          &          \textbf{0.865 $\pm$ 0.001} & 0.905 $\pm$ 0.052 &          \textbf{0.697 $\pm$ 0.010} &            \textbf{4 / 7} &   0.745          \\
    REINFORCE     &          \textbf{0.865 $\pm$ 0.000} & 0.948 $\pm$ 0.028 &          0.688 $\pm$ 0.010 &             1 / 7 & 0.541    \\
    \midrule
    \textbf{COMs (Ours)} & \textit{0.864 $\pm$ 0.000} & \textit{0.945 $\pm$ 0.033} & \textbf{0.699 $\pm$ 0.011} &    \textbf{4 / 7} & \textbf{0.985}  \\
    \midrule
    \midrule
    {}            &    \textbf{Superconductor} &     \textbf{Ant Morphology} &     \textbf{D'Kitty Morphology} &     \textbf{Hopper Controller} \\
    \midrule
    $\mathcal{D}$ (\textbf{best}) & 0.399 & 0.565 & 0.884 & 1.0 \\
    Auto. CbAS    & 0.421 $\pm$ 0.045 &          0.882 $\pm$ 0.045 &          0.906 $\pm$ 0.006 &          0.137 $\pm$ 0.005 \\
    CbAS          & \textbf{0.503 $\pm$ 0.069} &          0.876 $\pm$ 0.031 &          0.892 $\pm$ 0.008 &          0.141 $\pm$ 0.012 \\
    BO-qEI        & 0.402 $\pm$ 0.034 &          0.819 $\pm$ 0.000 &          0.896 $\pm$ 0.000 &          0.550 $\pm$ 0.118 \\
    CMA-ES        & 0.465 $\pm$ 0.024 &          \textbf{1.214 $\pm$ 0.732} &          0.724 $\pm$ 0.001 &          0.604 $\pm$ 0.215 \\
    Grad.         & \textbf{0.518 $\pm$ 0.024} &          0.293 $\pm$ 0.023 &          0.874 $\pm$ 0.022 &          1.035 $\pm$ 0.482 \\
    Grad. Min     & \textbf{0.506 $\pm$ 0.009} &          0.479 $\pm$ 0.064 &          0.889 $\pm$ 0.011 &          1.391 $\pm$ 0.589 \\
    Grad. Mean     & \textbf{0.499 $\pm$ 0.017} &          0.445 $\pm$ 0.080 &          0.892 $\pm$ 0.011 &          1.586 $\pm$ 0.454 \\
    MINs          & 0.469 $\pm$ 0.023 &          \textbf{0.913 $\pm$ 0.036} &          \textbf{0.945 $\pm$ 0.012} &          0.424 $\pm$ 0.166 \\
    REINFORCE     & 0.481 $\pm$ 0.013 &          0.266 $\pm$ 0.032 &          0.562 $\pm$ 0.196 &         -0.020 $\pm$ 0.067 \\
    \midrule
    \textbf{COMs (Ours)} & 0.439 $\pm$ 0.033 & \textbf{0.944 $\pm$ 0.016} & \textbf{0.949 $\pm$ 0.015} & \textbf{2.056 $\pm$ 0.314} \\
    \bottomrule
    \end{tabular}
    \caption{\textbf{Comparative evaluation of COMs} against prior methods in terms of the mean 100th-percentile score and its standard deviation over 8 trials. Tasks include \textbf{Superconductor-RandomForest-v0}, \textbf{HopperController-Exact-v0}, \textbf{AntMorphology-Exact-v0}, and \textbf{DKittyMorphology-Exact-v0}, which have a continuous design space and \textbf{GFP-Transformer-v0}, \textbf{TFBind8-Exact-v0}, and \textbf{UTR-ResNet-v0} with a discrete design space. COMs perform strictly better on high-dimensional tasks, obtaining about \textbf{1.3x} gains on Hopper Controller, and compelling gains on Ant Morphology and D'Kitty Morphology tasks. In addition, COMs is able to consistently find solutions that outperform the best training point for each task, given by \textbf{$\mathcal{D}$ (best)}. For each task, algorithms within one standard deviation of having the highest performance are bolded. COMs attain the optimal performance in 4/7 tasks (``\# Optimal'') attaining a normalized average performance of \textbf{0.985} compared to {0.852} for the next best method, outperforming other methods as indicated.} 
    \label{tab:perf}
    \vspace{-.3cm}
\end{table*}

\section{Related Work}
\label{sec:related}

We now briefly discuss prior works in MBO, including prior work on active model-based optimization and work that utilizes offline datasets for data-driven MBO.

\textbf{Bayesian optimization.} Most prior work on model-based optimization has focused on the active setting, where derivative free methods such as the cross-entropy method~\cite{rubinstein2004cross} and other methods derived from the REINFORCE trick~\cite{williams,rubinstein96optimizationof}, reward-weighted regression~\cite{peters2012reinforcement}, and Gaussian processes~\cite{snoek15scalable,shahriari2016TakingTH,snoek2012practical} have been utilized. Most of these methods focus mainly on low-dimensional tasks with active data collection. Practical approaches have combined these methods with Bayesian neural networks~\cite{snoek15scalable,snoek2012practical}, latent variable models~\cite{kim2018attentive,garnelo18neural,garnelo18conditional}, and ensembles of learned score models~\cite{angermueller2020population,Angermueller2020Model-based,mirhoseini2020chip}. These methods still require actively querying the true function $f(\x)$. Further, as shown by \citep{brookes2019conditioning,fannjiang2020autofocused,kumar2019model}, these Bayesian optimization methods are susceptible to producing invalid out-of-distribution inputs in the offline setting. Unlike these methods, COMs are specifically designed for the offline setting with high-dimensional inputs, and avoid out-of-distribution inputs. 

\textbf{Offline model-based optimization.} Recent works have also focused on optimization in the completely offline setting. Typically these methods utilize a generative model~\citep{kingma2013autoencoding,goodfellow2014generative} that models the manifold of inputs. \citep{brookes2019conditioning,fannjiang2020autofocused} use a variational autoencoder~\cite{kingma2013autoencoding} to model the space of $\x$ and use it alongside a learned objective function. \citep{kumar2019model} use a generative model to parameterize an inverse map from
the scalar objective $y$ to input $\x$ and search for the optimal one-dimensional $y$ during optimization.
Modeling the manifold of valid inputs globally can be extremely challenging (see Ant, Hopper, and DKitty results in Section~\ref{sec:exps}), and as a result these generative models often need to be tuned for each domain~\citep{trabucco2021designbench}. In contrast, COMs do not require any generative model, and fit an approximate objective function with a simple regularizer, providing both a simpler, easier-to-use algorithm and better empirical performance. \citet{Fu21nemo} also avoid training a generative model, but instead use normalized maximum likelihood, which requires training multiple discriminative models---COMs only requires one---and quantizing $y$, which COMs does not.

\textbf{Adversarial examples.} As discussed in Section~\ref{sec:prelims}, MBO methods based on learned objective models naturally query the learned function on ``adversarial'' inputs, where the learned function erroneously overestimates the true function. This is superficially similar to adversarial examples in supervised learning~\cite{goodfellow2014explaining}, which can be generated by maximizing the input against the loss function. While adversarial examples have been formalized as out-of-distribution inputs lying in the vicinity of the data distribution and prior works have attempted to correct for them by encouraging smoothness~\cite{Florian2018} of the learned function, and there is evidence that robust objective models help mitigate over estimation~\cite{Santurkar19RobustClassifier}, these solutions may be ineffective in MBO settings when the true function is itself non-smooth. Instead making conservative predictions on such adversarially generated inputs may prevent poor performance.

\section{Experimental Evaluation}
\label{sec:exps}
To evaluate the efficacy of COMs for offline model-based optimization, we first perform a comparative evaluation of COMs on four continuous and three discrete offline MBO tasks based on problems in physical sciences, neural network design, material design, and robotics, proposed in the design-bench benchmark~\cite{trabucco2021designbench}, that we also describe shortly.
In addition, we perform an empirical analysis on COMs that aims to answer the following questions: {\textbf{(1)} Is conservative training essential for improved performance and stability of COMs? How do COMs compare to a na\"ive objective model in terms of stability?, \textbf{(2)} {How sensitive are COMs are to various design choices during optimization?}, \textbf{(3)} Are COMs robust to hyperparameter choices and consistent to evaluation conditions?} We answer these questions by studying the behavior of COMs under controlled conditions and using visualizations for our analysis. Code for reproducing our results is at \href{https://github.com/brandontrabucco/design-baselines/blob/c65a53fe1e6567b740f0adf60c5db9921c1f2330/design_baselines/coms_cleaned/__init__.py}{https://github.com/brandontrabucco/design-baselines}

\begin{figure}[t]
    \centering
    \includegraphics[trim=50 5 70 5,clip,width=\linewidth]{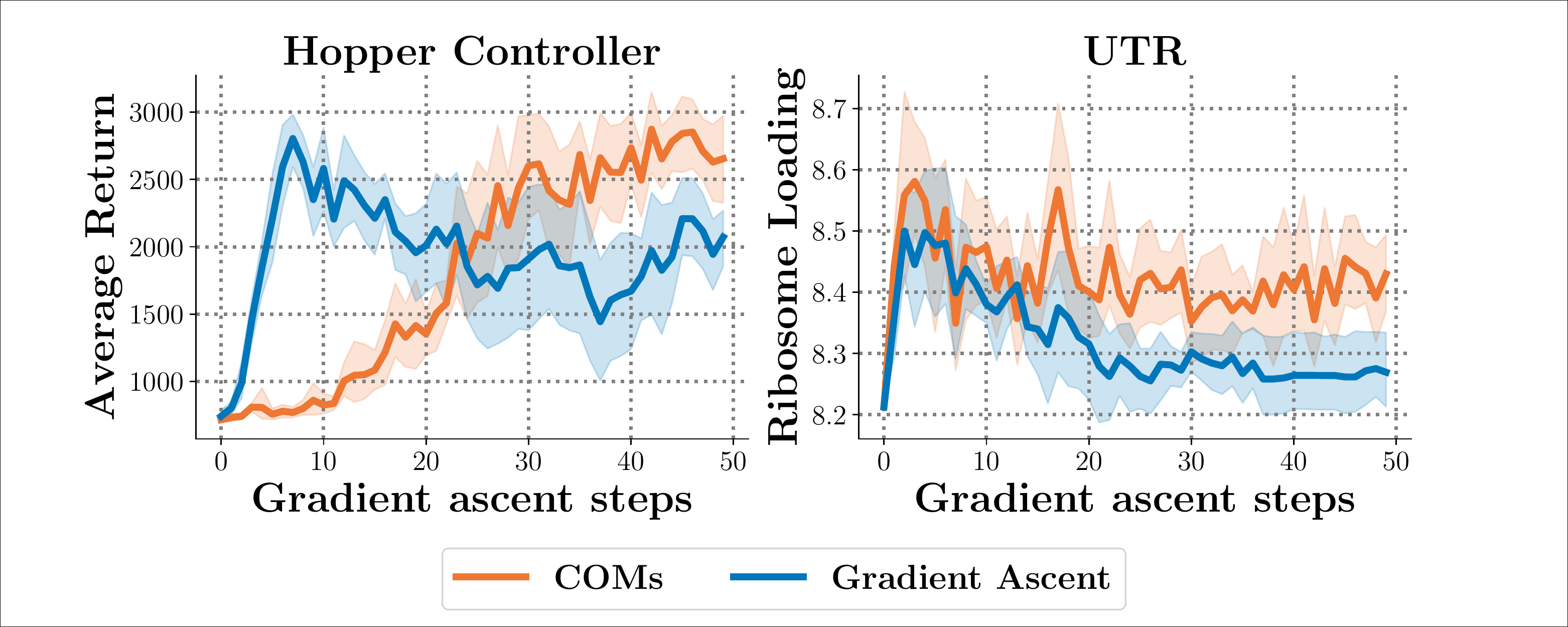}
    \vspace{-0.75cm}
    \caption{\textbf{Stability of COMs versus na\"{i}ve gradient ascent.} The x-axis shows the number of gradient ascent steps taken on the design $\x^*$, and the y-axis shows the 100th percentile of the ground truth task objective function evaluated at every gradient step, which is used only for analysis only and is unavailable to the algorithm. In both cases, COMs reach solutions that remain at higher performance stably, indicating that COMs are less sensitive to varying numbers of gradient ascent steps performed during optimization.}
    \label{fig:gradient_ascent}
    \vspace{-.5cm}
\end{figure}

\subsection{Empirical Performance on Benchmark Tasks}
\label{sec:results_benchmarks}

We first compare COMs to a range of recently proposed methods for offline MBO in high-dimensional input spaces: CbAS~\cite{brookes2019conditioning}, MINs~\cite{kumar2019model} and and autofocused CbAS~\cite{fannjiang2020autofocused}, that augments CbAS with a re-weighted objective model. Additionally, we also compare COMs to more standard baseline algorithms including REINFORCE~\cite{Williams92}, CMA-ES~\cite{Hansen06} , and BO-qEI, Bayesian Optimization with the quasi-expected improvement acquisition function \citep{reparameterization2017}. We also compare to a na\"ive \textbf{gradient ascent} baseline that first learns a model of the actual function via supervised regression (with no conservative term like COMs) and then optimizes this learned proxy via gradient ascent.  CbAS variants and MINs train generative models such as VAEs~\cite{kingma2013autoencoding} and GANs~\cite{goodfellow2014generative}, which generally require task-specific neural net architectures, as compared to the substantially simpler discriminative models used for COMs. In fact, we use the same architecture for COMs on all the tasks. In addition, we instantiate this gradient ascent baseline with an ensemble of learned models of the objective function, with either a minimum (Grad. Min.) or mean (Grad. Mean) over the ensemble to obtain a learned prediction that is then optimized via gradient ascent.

\underline{\textbf{Evaluation protocol.}} Our evaluation protocol follows prior work~\cite{brookes2019conditioning,trabucco2021designbench}:
we query each method to obtain the top $N=128$ most promising optimized samples $\x^\star_1, \cdots, \x^\star_{N}$ according to the model, and then report the $100^{\text{th}}$ percentile ground truth objective values on this set of samples, $\max(\x^\star_1, \cdots, \x^\star_N)$, as well as the $50^{\text{th}}$ percentile objective values (See Appendix~\ref{app:method_details} for numbers), averaged over 8 trials. We would argue that such an evaluation scheme is reasonable as it is typically followed in real-world MBO problems, where a set of optimized inputs are produced by the model, and the best performing one of them is finally used for deployment.

\underline{\textbf{Offline MBO tasks.}} The tasks we use can be found in the design-bench benchmark~\citep{trabucco2021designbench} at \href{https://github.com/brandontrabucco/design-bench}{github.com/brandontrabucco/design-bench}. Here we briefly summarize the tasks:   \textbf{(A)} Superconductor~\cite{fannjiang2020autofocused}, where the goal is to optimize over $86$-dimensional superconductor designs to maximize the critical temperature using 21263 points, \textbf{(B)} Hopper Controller~\cite{kumar2019model}, where the goal is to optimize over $5126$-dimensional weights of a neural network policy on the Hopper-v2 gym domain using a dataset of 3200 points, and \textbf{(C)} Ant and \textbf{(D)} D'Kitty Morphology, where the goal is to design the $60$ and $56$-dimensional morphologies, respectively, of robots to maximize policy performance using datasets, both of size 25009. {We also evaluate COMs on tasks with a discrete input space: \textbf{(E)} GFP~\citep{sarkisyan2016GFP}, where the goal is to generate the protein sequence with maximum fluorescence, \textbf{(F)} TF Bind 8, where the goal is to design a length 8 DNA sequence with high binding affinity with particular transcriptions factors and \textbf{(G)} UTR~\citep{barrera2016survey}, where the goal is to design a length 50 human 5`UTR DNA sequence with high ribosome loading. We represent discrete inputs in a transformed space of continuous-valued log probabilities for these tasks.} Results for all baseline methods are based on numbers reported by \citet{trabucco2021designbench}. Additional details for the setup of these tasks is provided in Appendix Section~\ref{appendix:task_descriptions}.

\underline{\textbf{Results on continuous tasks.}} Our results for different domains are shown in Table~\ref{tab:perf}. On three out of four continuous tasks, COMs attain the best results, in some cases (e.g. (B) HopperController) attaining the performance of over \textbf{1.3x} the best prior method.
{In addition, COMs are shown to be the only method to attain higher performance that the best training point on every task}. 
A na\"ive objective model without the conservative term, which is prone to falling off-the-manifold of valid inputs, struggles in especially high-dimensional tasks. Similarly, methods based on generative models, such as CbAS and MINs perform really poorly in the task of optimization over high-dimensional neural network weights in the HopperController task. 
These results indicate that COMs can serve as simple yet powerful method for offline MBO across a variety of domains. Furthermore, note that COMs only require training a parametric model $y = \hatf_\theta(\x)$ of the objective function with a regularizer, without any need for training a generative model, which may be harder in practice to effectively tune.

{\underline{\textbf{Results on tasks with a discrete input space.}} COMs perform competitively with the best performing methods on GFP and TF Bind8, clearly outperforming the best sample in the observed task dataset. COMs attain almost the best performance on the GFP task and outperform CbAS variants and MINs on the TF Bind8 task. In addition, COMs outperform prior methods on the {UTR} task, attaining performance within one standard deviation of the highest performing method on that task.}

{\textbf{Overall}, COMs attain the best performance on \textbf{4/7} tasks, achieving a normalized average objective value of \textbf{0.985}, improving over the next best method by \textbf{16\%} on average.
}

\begin{figure}[t]
    \centering
    \includegraphics[trim=50 5 70 5,clip,width=\linewidth]{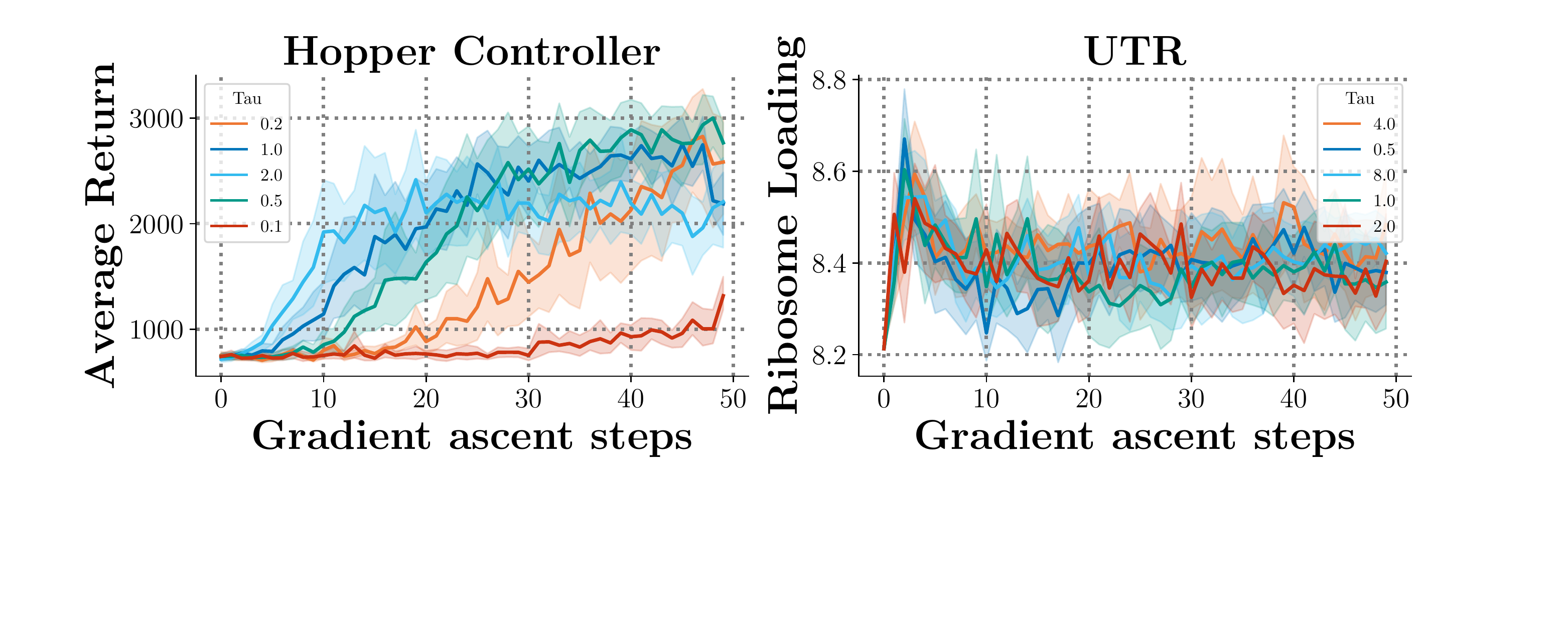}
    \vspace{-1.50cm}
    \caption{\textbf{Ablation of stability and universality of $\tau$.} In each of the two plots, we instantiate COMs on the {HopperController} and UTR tasks, and vary $\tau$ that controls the degree of conservatism (Equation~\ref{eqn:csm_training_lagrange}). The x-axis denotes the number of gradient ascent steps taken on the design $\x^{*}$ with respect to $\hatf_\theta$, and the y-axis indicates the 100th percentile of the ground truth function $\x$, which remains unobserved by the COMs algorithm, and only serves as an ablative visualization. The results demonstrate that increasing $\tau$ improves stability of COMs, and that COMs is robust to the particular choice of $\tau$. We select $\tau=0.5$ universally for continuous tasks, and $\tau=2.0$ universally for discrete tasks.}
    \label{fig:beta}
\vspace{-10pt}
\end{figure}

\subsection{Ablation Experiments}
\label{sec:ablations}

In this section, we perform an ablative experimental analysis of COMs to answer questions posed at the beginning of Section~\ref{sec:exps}. First, we evaluate the efficacy of using conservative training for learning a model of the objective function by comparing COMs to a na\"ive gradient ascent baseline and show that COMs are more \emph{stable}, i.e., the optimization performance of COMs is much less sensitive to the number of gradient ascent steps used for optimization. Second, we evaluate the effect of varying values of the Lagrange threshold $\tau$ in Equation~\ref{eqn:csm_training_lagrange}.
Third, we demonstrate the \emph{consistency} of COMs by evaluating the sensitivity of the optimization performance with respect to the number of samples $N$, that are used to compute the evaluation metric $\max(\x^*_1, \cdots, \x^*_N)$. 

\textbf{COMs are more stable than na\"ive gradient ascent.} In order to better compare COMs and a na\"ive objective model optimized using gradient ascent, we visualize the true objective value for each $\x_t$ encountered during optimization ($t$ in Line 2, Algorithm~\ref{alg:evaluation}) in Figure~\ref{fig:gradient_ascent}. Observe that a na\"ive objective model can attain good performance for a ``hand-tuned'' number of gradient ascent steps, but it soon degrades in performance  with more steps. This indicates that COMs are much more stable to the choice of number of gradient ascent steps performed than a na\"ive objective model.

\textbf{Ablation of $\tau$ in Equation~\ref{eqn:csm_training_lagrange}.} In Figure~\ref{fig:beta}, we evaluate the sensitivity of the performance of COMs as a function of the value of $\tau$. As shown in Figure~\ref{fig:beta}, we find that within the range of values evaluated, a higher value of $\tau$ gives rise to more stable optimization behavior, and we were able to utilize a universal value of $\tau=0.5$ for all tasks with a continuous input space and $\tau=2.0$ for all tasks with a discrete input space.

\begin{figure}
    \centering
    \includegraphics[trim=50 5 70 5,clip,width=\linewidth]{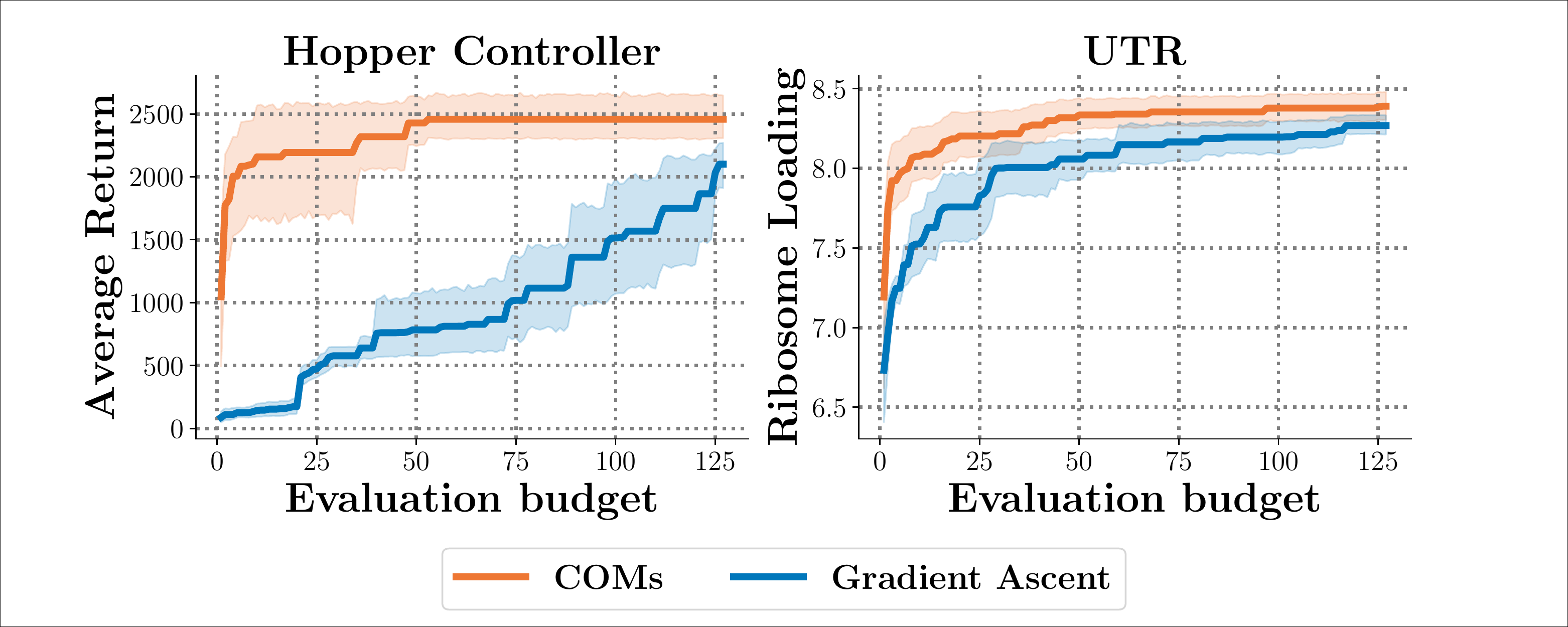}
    \vspace{-10pt}
    \caption{\textbf{Ablation of consistency of COMs by visualizing sensitivity to the post-optimization evaluation budget.} How does the performance of COMs and na\"{i}ve gradient ascent vary as the evaluation budget is reduced? In our standard evaluation, we allow each offline MBO algorithm a ``budget'' of 128 evaluations for determining 100th and 50th percentile performance.  The x-axis indicates the number $N$ of allowed evaluations, and the y-axis indicates the 100th percentile performance of the chosen $N$ points. As this evaluation budget is reduced, COMs is resilient, and remains superior to the na\"ive objective trained via supervised regression and optimized via standard gradient ascent. In the case of {HopperController}, COMs is nearly invariant to budgets down to size 55. This indicates COMs consistently produce optimized $\x^\star$ that attain high values under the true function.}
    \label{fig:budget}
    \vspace{-15pt}
\end{figure}

\textbf{COMs consistently produce well-performing inputs.} Finally, we evaluate the sensitivity of COMs to the evaluation procedure itself. Standard evaluation practice in offline MBO dictates evaluating a batch of $N$ most promising candidate inputs produced by the algorithm with the ground truth objective, where $N$ remains constant across all algorithms \cite{trabucco2021designbench,brookes2019conditioning}, and using the maximum value attained over these inputs as the performance of the algorithm, i.e., $\max(\x^*_1, \cdots, \x^*_N)$. This measures if the algorithm performs well within a provided ``evaluation budget'' of $N$ evaluations. An algorithm is more \emph{consistent} if it attains higher values of the groundtruth function with a smaller value of the evaluation budget, $N$. We used $N=128$ for evaluating all methods in Table~\ref{tab:perf}, but the value of $N$ is technically a hyperparameter and an effective offline MBO method should be resilient to this value, ideally. COMs are resilient to $N$: as we vary $N$ from 1 to 128 in Figure~\ref{fig:budget}, COMs not only perform well at larger values of $N$, but are also effective with smaller budgets, reaching near-optimal performance on HopperController in with a budget of 55, while a na\"ive objective model needs a budget twice as large to reach its own optimal performance, which is lower than that of COMs.

\section{Discussion and Conclusion}
\label{sec:discussion}
We proposed conservative objective models (COM), a simple method for offline model-based optimization, that learns a conservative estimate of the actual objective function and optimizes the input against this estimate. Empirically, we find that COMs give rise to good offline optimization performance and are considerably more stable than prior MBO methods, returning solutions that are comparable to and even better than the best existing MBO algorithms on four benchmark tasks. In this evaluation, COMs are consistently high performing, and in high-dimensional cases such as the Hopper Controller task, COMs improves on the next best method by a factor of \textbf{1.3x}.
The simplicity of COMs combined with their empirical strength make them a promising optimization backbone to find solutions to challenging and high-dimensional offline MBO problems. In contrast to certain prior methods, COMs are designed to mitigate overestimation of out-of-distribution inputs close to the input manifold, and show improved stability at good solutions. 

While our results suggest that COMs are effective on a number of MBO problems, there exists room for improvement. The somewhat na\"ive gradient-ascent optimization procedure employed by COMs can likely be improved by combining it with manifold modelling techniques, which can accelerate optimization by alleviating the need to traverse the raw input space. Similar to offline RL and supervised learning, learned objective models in MBO are prone to overfitting, especially in limited data settings. Understanding different mechanisms by which overfitting can happen and correcting for it is likely to greatly amplify the applicability of COMs to a large set of practical MBO problems that only come with small datasets.
Understanding why and how samples found by gradient ascent become off-manifold could result in a more powerful gradient-ascent optimization procedure that does not require a model-selection scheme.

\section*{Acknowledgements}
We thank anonymous ICML reviewers, Aurick Zhou and Justin Fu for discussions and feedback on the tasks and the method in this paper, and all other members from RAIL at UC Berkeley for their suggestions, feedback and support. This work was supported by National Science Foundation, the DARPA Assured Autonomy Program, C3.ai DTI, and compute support from Google, Microsoft and Intel.

\bibliography{iclr2021_conference}

\begin{thebibliography}{47}
\providecommand{\natexlab}[1]{#1}
\providecommand{\url}[1]{\texttt{#1}}
\expandafter\ifx\csname urlstyle\endcsname\relax
  \providecommand{\doi}[1]{doi: #1}\else
  \providecommand{\doi}{doi: \begingroup \urlstyle{rm}\Url}\fi

\bibitem[Angermueller et~al.(2020{\natexlab{a}})Angermueller, Belanger, Gane,
  Mariet, Dohan, Murphy, Colwell, and Sculley]{angermueller2020population}
Angermueller, C., Belanger, D., Gane, A., Mariet, Z., Dohan, D., Murphy, K.,
  Colwell, L., and Sculley, D.
\newblock Population-based black-box optimization for biological sequence
  design.
\newblock \emph{arXiv preprint arXiv:2006.03227}, 2020{\natexlab{a}}.

\bibitem[Angermueller et~al.(2020{\natexlab{b}})Angermueller, Dohan, Belanger,
  Deshpande, Murphy, and Colwell]{Angermueller2020Model-based}
Angermueller, C., Dohan, D., Belanger, D., Deshpande, R., Murphy, K., and
  Colwell, L.
\newblock Model-based reinforcement learning for biological sequence design.
\newblock In \emph{International Conference on Learning Representations},
  2020{\natexlab{b}}.
\newblock URL \url{https://openreview.net/forum?id=HklxbgBKvr}.

\bibitem[Barrera et~al.(2016)Barrera, Vedenko, Kurland, Rogers, Gisselbrecht,
  Rossin, Woodard, Mariani, Kock, Inukai, et~al.]{barrera2016survey}
Barrera, L.~A., Vedenko, A., Kurland, J.~V., Rogers, J.~M., Gisselbrecht,
  S.~S., Rossin, E.~J., Woodard, J., Mariani, L., Kock, K.~H., Inukai, S.,
  et~al.
\newblock Survey of variation in human transcription factors reveals prevalent
  dna binding changes.
\newblock \emph{Science}, 351\penalty0 (6280):\penalty0 1450--1454, 2016.

\bibitem[Berkenkamp et~al.(2016)Berkenkamp, Schoellig, and
  Krause]{berkenkamp2016safe}
Berkenkamp, F., Schoellig, A.~P., and Krause, A.
\newblock Safe controller optimization for quadrotors with gaussian processes.
\newblock In \emph{2016 IEEE International Conference on Robotics and
  Automation (ICRA)}, pp.\  491--496. IEEE, 2016.

\bibitem[Brookes et~al.(2019)Brookes, Park, and
  Listgarten]{brookes2019conditioning}
Brookes, D.~H., Park, H., and Listgarten, J.
\newblock Conditioning by adaptive sampling for robust design.
\newblock \emph{arXiv preprint arXiv:1901.10060}, 2019.

\bibitem[Fannjiang \& Listgarten(2020)Fannjiang and
  Listgarten]{fannjiang2020autofocused}
Fannjiang, C. and Listgarten, J.
\newblock Autofocused oracles for model-based design.
\newblock \emph{arXiv preprint arXiv:2006.08052}, 2020.

\bibitem[Fu \& Levine(2021)Fu and Levine]{Fu21nemo}
Fu, J. and Levine, S.
\newblock Offline model-based optimization via normalized maximum likelihood
  estimation.
\newblock In \emph{9th International Conference on Learning Representations,
  {ICLR} 2021, Virtual Event, Austria, May 3-7, 2021}. OpenReview.net, 2021.
\newblock URL \url{https://openreview.net/forum?id=FmMKSO4e8JK}.

\bibitem[Fu et~al.(2020)Fu, Kumar, Nachum, Tucker, and Levine]{fu2020d4rl}
Fu, J., Kumar, A., Nachum, O., Tucker, G., and Levine, S.
\newblock D4rl: Datasets for deep data-driven reinforcement learning, 2020.

\bibitem[Garnelo et~al.(2018{\natexlab{a}})Garnelo, Rosenbaum, Maddison,
  Ramalho, Saxton, Shanahan, Teh, Rezende, and Eslami]{garnelo18conditional}
Garnelo, M., Rosenbaum, D., Maddison, C., Ramalho, T., Saxton, D., Shanahan,
  M., Teh, Y.~W., Rezende, D., and Eslami, S. M.~A.
\newblock Conditional neural processes.
\newblock In \emph{Proceedings of the 35th International Conference on Machine
  Learning}. PMLR, 2018{\natexlab{a}}.

\bibitem[Garnelo et~al.(2018{\natexlab{b}})Garnelo, Schwarz, Rosenbaum, Viola,
  Rezende, Eslami, and Teh]{garnelo18neural}
Garnelo, M., Schwarz, J., Rosenbaum, D., Viola, F., Rezende, D.~J., Eslami, S.
  M.~A., and Teh, Y.~W.
\newblock Neural processes.
\newblock \emph{CoRR}, abs/1807.01622, 2018{\natexlab{b}}.
\newblock URL \url{http://arxiv.org/abs/1807.01622}.

\bibitem[Gaulton et~al.(2012)Gaulton, Bellis, Bento, Chambers, Davies, Hersey,
  Light, McGlinchey, Michalovich, Al-Lazikani, et~al.]{gaulton2012chembl}
Gaulton, A., Bellis, L.~J., Bento, A.~P., Chambers, J., Davies, M., Hersey, A.,
  Light, Y., McGlinchey, S., Michalovich, D., Al-Lazikani, B., et~al.
\newblock Chembl: a large-scale bioactivity database for drug discovery.
\newblock \emph{Nucleic acids research}, 40\penalty0 (D1):\penalty0
  D1100--D1107, 2012.

\bibitem[G{\'o}mez-Bombarelli et~al.(2018)G{\'o}mez-Bombarelli, Duvenaud,
  Hern{\'a}ndez-Lobato, Aguilera-Iparraguirre, Hirzel, Adams, and
  Aspuru-Guzik]{GmezBombarelli2018AutomaticCD}
G{\'o}mez-Bombarelli, R., Duvenaud, D., Hern{\'a}ndez-Lobato, J.~M.,
  Aguilera-Iparraguirre, J., Hirzel, T.~D., Adams, R.~P., and Aspuru-Guzik, A.
\newblock Automatic chemical design using a data-driven continuous
  representation of molecules.
\newblock In \emph{ACS central science}, 2018.

\bibitem[Goodfellow et~al.(2014{\natexlab{a}})Goodfellow, Pouget-Abadie, Mirza,
  Xu, Warde-Farley, Ozair, Courville, and Bengio]{goodfellow2014generative}
Goodfellow, I.~J., Pouget-Abadie, J., Mirza, M., Xu, B., Warde-Farley, D.,
  Ozair, S., Courville, A., and Bengio, Y.
\newblock Generative adversarial nets.
\newblock NIPS'14, 2014{\natexlab{a}}.

\bibitem[Goodfellow et~al.(2014{\natexlab{b}})Goodfellow, Shlens, and
  Szegedy]{goodfellow2014explaining}
Goodfellow, I.~J., Shlens, J., and Szegedy, C.
\newblock Explaining and harnessing adversarial examples.
\newblock \emph{arXiv preprint arXiv:1412.6572}, 2014{\natexlab{b}}.

\bibitem[Haarnoja et~al.(2018)Haarnoja, Zhou, Abbeel, and
  Levine]{Haarnoja2018SoftAO}
Haarnoja, T., Zhou, A., Abbeel, P., and Levine, S.
\newblock Soft actor-critic: Off-policy maximum entropy deep reinforcement
  learning with a stochastic actor.
\newblock In \emph{ICML}, 2018.

\bibitem[Hamidieh(2018)]{hamidieh2018superconductor}
Hamidieh, K.
\newblock A data-driven statistical model for predicting the critical
  temperature of a superconductor.
\newblock \emph{Computational Materials Science}, 154:\penalty0 346 -- 354,
  2018.
\newblock ISSN 0927-0256.
\newblock \doi{https://doi.org/10.1016/j.commatsci.2018.07.052}.
\newblock URL
  \url{http://www.sciencedirect.com/science/article/pii/S0927025618304877}.

\bibitem[Hansen(2006)]{Hansen06}
Hansen, N.
\newblock The {CMA} evolution strategy: {A} comparing review.
\newblock In Lozano, J.~A., Larra{\~{n}}aga, P., Inza, I., and Bengoetxea, E.
  (eds.), \emph{Towards a New Evolutionary Computation - Advances in the
  Estimation of Distribution Algorithms}, volume 192 of \emph{Studies in
  Fuzziness and Soft Computing}, pp.\  75--102. Springer, 2006.
\newblock \doi{10.1007/3-540-32494-1\_4}.
\newblock URL \url{https://doi.org/10.1007/3-540-32494-1\_4}.

\bibitem[Hill et~al.(2018)Hill, Raffin, Ernestus, Gleave, Kanervisto, Traore,
  Dhariwal, Hesse, Klimov, Nichol, Plappert, Radford, Schulman, Sidor, and
  Wu]{stable-baselines}
Hill, A., Raffin, A., Ernestus, M., Gleave, A., Kanervisto, A., Traore, R.,
  Dhariwal, P., Hesse, C., Klimov, O., Nichol, A., Plappert, M., Radford, A.,
  Schulman, J., Sidor, S., and Wu, Y.
\newblock Stable baselines.
\newblock \url{https://github.com/hill-a/stable-baselines}, 2018.

\bibitem[Hoburg \& Abbeel(2012)Hoburg and Abbeel]{hoburg2012}
Hoburg, W. and Abbeel, P.
\newblock Geometric programming for aircraft design optimization.
\newblock volume~52, 04 2012.
\newblock ISBN 978-1-60086-937-2.
\newblock \doi{10.2514/6.2012-1680}.

\bibitem[Jacot et~al.(2018)Jacot, Gabriel, and Hongler]{jacot2018neural}
Jacot, A., Gabriel, F., and Hongler, C.
\newblock Neural tangent kernel: Convergence and generalization in neural
  networks.
\newblock \emph{arXiv preprint arXiv:1806.07572}, 2018.

\bibitem[Kim et~al.(2019)Kim, Mnih, Schwarz, Garnelo, Eslami, Rosenbaum,
  Vinyals, and Teh]{kim2018attentive}
Kim, H., Mnih, A., Schwarz, J., Garnelo, M., Eslami, A., Rosenbaum, D.,
  Vinyals, O., and Teh, Y.~W.
\newblock Attentive neural processes.
\newblock In \emph{International Conference on Learning Representations}, 2019.
\newblock URL \url{https://openreview.net/forum?id=SkE6PjC9KX}.

\bibitem[Kingma \& Ba(2015)Kingma and Ba]{kingma2014adam}
Kingma, D.~P. and Ba, J.
\newblock Adam: {A} method for stochastic optimization.
\newblock In Bengio, Y. and LeCun, Y. (eds.), \emph{3rd International
  Conference on Learning Representations, {ICLR} 2015, San Diego, CA, USA, May
  7-9, 2015, Conference Track Proceedings}, 2015.
\newblock URL \url{http://arxiv.org/abs/1412.6980}.

\bibitem[Kingma \& Welling(2013)Kingma and Welling]{kingma2013autoencoding}
Kingma, D.~P. and Welling, M.
\newblock Auto-encoding variational bayes, 2013.
\newblock URL \url{http://arxiv.org/abs/1312.6114}.
\newblock cite arxiv:1312.6114.

\bibitem[Kumar \& Levine(2019)Kumar and Levine]{kumar2019model}
Kumar, A. and Levine, S.
\newblock Model inversion networks for model-based optimization.
\newblock \emph{NeurIPS}, 2019.

\bibitem[Kumar et~al.(2020)Kumar, Zhou, Tucker, and
  Levine]{kumar2020conservative}
Kumar, A., Zhou, A., Tucker, G., and Levine, S.
\newblock Conservative q-learning for offline reinforcement learning.
\newblock \emph{arXiv preprint arXiv:2006.04779}, 2020.

\bibitem[Levine et~al.(2020)Levine, Kumar, Tucker, and Fu]{levine2020offline}
Levine, S., Kumar, A., Tucker, G., and Fu, J.
\newblock Offline reinforcement learning: Tutorial, review, and perspectives on
  open problems.
\newblock \emph{arXiv preprint arXiv:2005.01643}, 2020.

\bibitem[Liao et~al.(2019)Liao, Wang, Yang, Lee, Pister, Levine, and
  Calandra]{liao2019}
Liao, T., Wang, G., Yang, B., Lee, R., Pister, K., Levine, S., and Calandra, R.
\newblock Data-efficient learning of morphology and controller for a
  microrobot.
\newblock In \emph{2019 IEEE International Conference on Robotics and
  Automation}, 2019.
\newblock URL \url{https://arxiv.org/abs/1905.01334}.

\bibitem[Mirhoseini et~al.(2020)Mirhoseini, Goldie, Yazgan, Jiang, Songhori,
  Wang, Lee, Johnson, Pathak, Bae, et~al.]{mirhoseini2020chip}
Mirhoseini, A., Goldie, A., Yazgan, M., Jiang, J., Songhori, E., Wang, S., Lee,
  Y.-J., Johnson, E., Pathak, O., Bae, S., et~al.
\newblock Chip placement with deep reinforcement learning.
\newblock \emph{arXiv preprint arXiv:2004.10746}, 2020.

\bibitem[Peters \& Schaal(2007)Peters and Schaal]{peters2012reinforcement}
Peters, J. and Schaal, S.
\newblock Reinforcement learning by reward-weighted regression for operational
  space control.
\newblock In \emph{Proceedings of the 24th International Conference on Machine
  Learning}, ICML '07, 2007.

\bibitem[Rao et~al.(2019)Rao, Bhattacharya, Thomas, Duan, Chen, Canny, Abbeel,
  and Song]{RaoBTDCCAS19}
Rao, R., Bhattacharya, N., Thomas, N., Duan, Y., Chen, P., Canny, J.~F.,
  Abbeel, P., and Song, Y.~S.
\newblock Evaluating protein transfer learning with {TAPE}.
\newblock In Wallach, H.~M., Larochelle, H., Beygelzimer, A.,
  d'Alch{\'{e}}{-}Buc, F., Fox, E.~B., and Garnett, R. (eds.), \emph{Advances
  in Neural Information Processing Systems 32: Annual Conference on Neural
  Information Processing Systems 2019, NeurIPS 2019, 8-14 December 2019,
  Vancouver, BC, Canada}, pp.\  9686--9698, 2019.
\newblock URL
  \url{http://papers.nips.cc/paper/9163-evaluating-protein-transfer-learning-with-tape}.

\bibitem[Rubinstein(1996)]{rubinstein96optimizationof}
Rubinstein, R.~Y.
\newblock Optimization of computer simulation models with rare events.
\newblock \emph{European Journal of Operations Research}, 99:\penalty0 89--112,
  1996.

\bibitem[Rubinstein \& Kroese(2004)Rubinstein and Kroese]{rubinstein2004cross}
Rubinstein, R.~Y. and Kroese, D.~P.
\newblock \emph{The Cross Entropy Method: A Unified Approach To Combinatorial
  Optimization, Monte-carlo Simulation (Information Science and Statistics)}.
\newblock Springer-Verlag, Berlin, Heidelberg, 2004.
\newblock ISBN 038721240X.

\bibitem[Sample et~al.(2019)Sample, Wang, Reid, Presnyak, McFadyen, Morris, and
  Seelig]{sample2019human}
Sample, P.~J., Wang, B., Reid, D.~W., Presnyak, V., McFadyen, I.~J., Morris,
  D.~R., and Seelig, G.
\newblock Human 5` utr design and variant effect prediction from a massively
  parallel translation assay.
\newblock \emph{Nature biotechnology}, 37\penalty0 (7):\penalty0 803--809,
  2019.

\bibitem[Santurkar et~al.(2019)Santurkar, Ilyas, Tsipras, Engstrom, Tran, and
  Madry]{Santurkar19RobustClassifier}
Santurkar, S., Ilyas, A., Tsipras, D., Engstrom, L., Tran, B., and Madry, A.
\newblock Image synthesis with a single (robust) classifier.
\newblock In Wallach, H.~M., Larochelle, H., Beygelzimer, A.,
  d'Alch{\'{e}}{-}Buc, F., Fox, E.~B., and Garnett, R. (eds.), \emph{Advances
  in Neural Information Processing Systems 32: Annual Conference on Neural
  Information Processing Systems 2019, NeurIPS 2019, 8-14 December 2019,
  Vancouver, BC, Canada}, pp.\  1260--1271, 2019.

\bibitem[Sarkisyan et~al.(2016{\natexlab{a}})Sarkisyan, Bolotin, Meer,
  Usmanova, Mishin, Sharonov, Ivankov, Bozhanova, Baranov, Soylemez,
  Bogatyreva, Vlasov, Egorov, Logacheva, Kondrashov, Chudakov, Putintseva,
  Mamedov, Tawfik, Lukyanov, and Kondrashov]{gfp}
Sarkisyan, K.~S., Bolotin, D.~A., Meer, M.~V., Usmanova, D.~R., Mishin, A.~S.,
  Sharonov, G.~V., Ivankov, D.~N., Bozhanova, N.~G., Baranov, M.~S., Soylemez,
  O., Bogatyreva, N.~S., Vlasov, P.~K., Egorov, E.~S., Logacheva, M.~D.,
  Kondrashov, A.~S., Chudakov, D.~M., Putintseva, E.~V., Mamedov, I.~Z.,
  Tawfik, D.~S., Lukyanov, K.~A., and Kondrashov, F.~A.
\newblock Local fitness landscape of the green fluorescent protein.
\newblock \emph{Nature}, 533\penalty0 (7603):\penalty0 397--401, May
  2016{\natexlab{a}}.
\newblock ISSN 1476-4687.
\newblock \doi{10.1038/nature17995}.
\newblock URL \url{https://doi.org/10.1038/nature17995}.

\bibitem[Sarkisyan et~al.(2016{\natexlab{b}})Sarkisyan, Bolotin, Meer,
  Usmanova, Mishin, Sharonov, Ivankov, Bozhanova, Baranov, Soylemez,
  Bogatyreva, Vlasov, Egorov, Logacheva, Kondrashov, Chudakov, Putintseva,
  Mamedov, Tawfik, Lukyanov, and Kondrashov]{sarkisyan2016GFP}
Sarkisyan, K.~S., Bolotin, D.~A., Meer, M.~V., Usmanova, D.~R., Mishin, A.~S.,
  Sharonov, G.~V., Ivankov, D.~N., Bozhanova, N.~G., Baranov, M.~S., Soylemez,
  O., Bogatyreva, N.~S., Vlasov, P.~K., Egorov, E.~S., Logacheva, M.~D.,
  Kondrashov, A.~S., Chudakov, D.~M., Putintseva, E.~V., Mamedov, I.~Z.,
  Tawfik, D.~S., Lukyanov, K.~A., and Kondrashov, F.~A.
\newblock Local fitness landscape of the green fluorescent protein.
\newblock \emph{Nature}, 533\penalty0 (7603):\penalty0 397--401, May
  2016{\natexlab{b}}.
\newblock ISSN 1476-4687.
\newblock \doi{10.1038/nature17995}.
\newblock URL \url{https://doi.org/10.1038/nature17995}.

\bibitem[Schulman et~al.(2017)Schulman, Wolski, Dhariwal, Radford, and
  Klimov]{Schulman2017ppo}
Schulman, J., Wolski, F., Dhariwal, P., Radford, A., and Klimov, O.
\newblock Proximal policy optimization algorithms.
\newblock \emph{CoRR}, abs/1707.06347, 2017.
\newblock URL \url{http://arxiv.org/abs/1707.06347}.

\bibitem[Shahriari et~al.(2016)Shahriari, Swersky, Wang, Adams, and
  de~Freitas]{shahriari2016TakingTH}
Shahriari, B., Swersky, K., Wang, Z., Adams, R.~P., and de~Freitas, N.
\newblock Taking the human out of the loop: A review of bayesian optimization.
\newblock \emph{Proceedings of the IEEE}, 104:\penalty0 148--175, 2016.

\bibitem[Snoek et~al.(2012)Snoek, Larochelle, and Adams]{snoek2012practical}
Snoek, J., Larochelle, H., and Adams, R.~P.
\newblock Practical bayesian optimization of machine learning algorithms.
\newblock In \emph{Proceedings of the 25th International Conference on Neural
  Information Processing Systems - Volume 2}, NIPS'12, 2012.
\newblock URL \url{http://dl.acm.org/citation.cfm?id=2999325.2999464}.

\bibitem[Snoek et~al.(2015)Snoek, Rippel, Swersky, Kiros, Satish, Sundaram,
  Patwary, Prabhat, and Adams]{snoek15scalable}
Snoek, J., Rippel, O., Swersky, K., Kiros, R., Satish, N., Sundaram, N.,
  Patwary, M., Prabhat, M., and Adams, R.
\newblock Scalable bayesian optimization using deep neural networks.
\newblock In \emph{Proceedings of the 32nd International Conference on Machine
  Learning}. PMLR, 2015.

\bibitem[Thomas et~al.(2015)Thomas, Theocharous, and
  Ghavamzadeh]{thomas2015high}
Thomas, P., Theocharous, G., and Ghavamzadeh, M.
\newblock High-confidence off-policy evaluation.
\newblock In \emph{Proceedings of the AAAI Conference on Artificial
  Intelligence}, volume~29, 2015.

\bibitem[Trabucco et~al.(2021)Trabucco, Geng, Kumar, and
  Levine]{trabucco2021designbench}
Trabucco, B., Geng, X., Kumar, A., and Levine, S.
\newblock Design-bench: Benchmarks for data-driven offline model-based
  optimization, 2021.
\newblock URL \url{https://github.com/brandontrabucco/design-bench}.

\bibitem[Tram{\`{e}}r et~al.(2018)Tram{\`{e}}r, Kurakin, Papernot, Goodfellow,
  Boneh, and McDaniel]{Florian2018}
Tram{\`{e}}r, F., Kurakin, A., Papernot, N., Goodfellow, I.~J., Boneh, D., and
  McDaniel, P.~D.
\newblock Ensemble adversarial training: Attacks and defenses.
\newblock In \emph{6th International Conference on Learning Representations,
  {ICLR} 2018, Vancouver, BC, Canada, April 30 - May 3, 2018, Conference Track
  Proceedings}. OpenReview.net, 2018.
\newblock URL \url{https://openreview.net/forum?id=rkZvSe-RZ}.

\bibitem[Williams(1992{\natexlab{a}})]{Williams92}
Williams, R.~J.
\newblock Simple statistical gradient-following algorithms for connectionist
  reinforcement learning.
\newblock \emph{Mach. Learn.}, 8:\penalty0 229--256, 1992{\natexlab{a}}.
\newblock \doi{10.1007/BF00992696}.
\newblock URL \url{https://doi.org/10.1007/BF00992696}.

\bibitem[Williams(1992{\natexlab{b}})]{williams}
Williams, R.~J.
\newblock Simple statistical gradient-following algorithms for connectionist
  reinforcement learning.
\newblock \emph{Machine Learning}, 8\penalty0 (3-4):\penalty0 229--256, May
  1992{\natexlab{b}}.

\bibitem[Wilson et~al.(2017)Wilson, Moriconi, Hutter, and
  Deisenroth]{reparameterization2017}
Wilson, J.~T., Moriconi, R., Hutter, F., and Deisenroth, M.~P.
\newblock The reparameterization trick for acquisition functions.
\newblock \emph{CoRR}, abs/1712.00424, 2017.
\newblock URL \url{http://arxiv.org/abs/1712.00424}.

\bibitem[Zoph \& Le(2017)Zoph and Le]{zoph2017}
Zoph, B. and Le, Q.~V.
\newblock Neural architecture search with reinforcement learning.
\newblock 2017.
\newblock URL \url{https://arxiv.org/abs/1611.01578}.

\end{thebibliography}
\bibliographystyle{icml2019}

\clearpage
\appendix
\onecolumn
\part*{Appendices}

\section{Method Details}
\label{app:method_details}

In this section we provide additional information about our method \textbf{conservative objective models (COMs)}. In this section, we provide a 50th percentile evaluation of COMs compared to other methods and discuss additional details for COMs including including hyperparameters. Finally, we discuss how the benchmarking tasks are curated.

\subsection{Additional Results}

In addition to reporting performance using the mean 100th percentile objective value, as in Table~\ref{tab:perf}, we additionally provide a table measuring the mean 50th percentile objective value in Table~\ref{tab:perf50}. This follows the convention for evaluation standardized by \cite{trabucco2021designbench} when benchmarking model-based optimization algorithms. The 50th percentile results in Table~\ref{tab:perf50} confirm that COMs again is optimal in \textbf{4/7} tasks, the most of any method we tested, and attains a normalized average performance of \textbf{0.590}, the greatest normalized average performance of all baselines we tested.

\begin{table}[h]
    \centering
    \setlength\tabcolsep{4pt}
    \begin{tabular}{l||r|r|r|r|r}
    \toprule
    {} &            \textbf{GFP} &      \textbf{TF Bind 8} &            \textbf{UTR} & \textbf{Norm. avg. perf.} & \textbf{\# Optimal} \\
    \midrule
    $\mathcal{D}$ (\textbf{best}) & 0.789 & 0.439 & 0.593 & \\
    Auto. CbAS              &  0.848 $\pm$ 0.007 &  0.419 $\pm$ 0.007 &  0.576 $\pm$ 0.011 & 0.441 & 0 / 7 \\
    CbAS                    &  0.852 $\pm$ 0.004 &  0.428 $\pm$ 0.010 &  0.572 $\pm$ 0.023 & 0.444 & 0 / 7 \\
    BO-qEI                  &  0.246 $\pm$ 0.341 &  0.439 $\pm$ 0.000 &  0.571 $\pm$ 0.000 & 0.478 & 1 / 7 \\
    CMA-ES                  &  0.047 $\pm$ 0.000 &  0.537 $\pm$ 0.014 &  \textbf{0.612 $\pm$ 0.014} & 0.311 & 1 / 7 \\
    Grad.                   &  0.838 $\pm$ 0.004 &  0.609 $\pm$ 0.019 &  0.593 $\pm$ 0.006 & 0.464 & 1 / 7 \\
    Grad. Min               &  0.837 $\pm$ 0.001 &  \textbf{0.645 $\pm$ 0.030} &  0.598 $\pm$ 0.005 & 0.529 & 2 / 7 \\
    Grad. Mean              &  0.838 $\pm$ 0.002 &  \textbf{0.616 $\pm$ 0.023} &  \textbf{0.601 $\pm$ 0.003} & 0.528 & 3 / 7 \\
    MINs                    &  0.820 $\pm$ 0.018 &  0.421 $\pm$ 0.015 &  0.585 $\pm$ 0.007 & 0.574 & 3 / 7 \\
    REINFORCE               &  0.844 $\pm$ 0.003 &  0.462 $\pm$ 0.021 &  0.568 $\pm$ 0.017 & 0.395 & 1 / 7 \\
    \midrule
    \textbf{COMs (Ours)}    &  \textbf{0.864 $\pm$ 0.000} &  0.497 $\pm$ 0.038 &  \textbf{0.608 $\pm$ 0.012} & \textbf{0.590} & \textbf{4 / 7} \\
    \midrule
    \midrule
    {} & \textbf{Superconductor} &             \textbf{Ant Morphology} &         \textbf{D'Kitty Morphology} &          \textbf{Hopper Controller} \\
    \midrule
    $\mathcal{D}$ (\textbf{best}) & 0.399 & 0.565 & 0.884 & 1.0 \\
    Auto. CbAS              &  0.131 $\pm$ 0.010 &   0.364 $\pm$ 0.014 &  0.736 $\pm$ 0.025 &   0.019 $\pm$ 0.008 \\
    CbAS                          &  0.111 $\pm$ 0.017 &   0.384 $\pm$ 0.016 &  0.753 $\pm$ 0.008 &   0.015 $\pm$ 0.002 \\
    BO-qEI                        &  0.300 $\pm$ 0.015 &   0.567 $\pm$ 0.000 &  \textbf{0.883 $\pm$ 0.000} &   0.343 $\pm$ 0.010 \\
    CMA-ES                        &  0.379 $\pm$ 0.003 &  -0.045 $\pm$ 0.004 &  0.684 $\pm$ 0.016 &  -0.033 $\pm$ 0.005 \\
    Grad.               &  \textbf{0.476 $\pm$ 0.022} &   0.134 $\pm$ 0.018 &  0.509 $\pm$ 0.200 &   0.092 $\pm$ 0.084 \\
    Grad. Min  &  \textbf{0.471 $\pm$ 0.016} &   0.185 $\pm$ 0.008 &  0.746 $\pm$ 0.034 &   0.222 $\pm$ 0.065 \\
    Grad. Mean &  \textbf{0.469 $\pm$ 0.022} &   0.187 $\pm$ 0.009 &  0.748 $\pm$ 0.024 &   0.243 $\pm$ 0.064 \\
    MINs                          &  0.336 $\pm$ 0.016 &   \textbf{0.618 $\pm$ 0.040} &  \textbf{0.887 $\pm$ 0.004} &   \textbf{0.352 $\pm$ 0.058} \\
    REINFORCE                     &  \textbf{0.463 $\pm$ 0.016} &   0.138 $\pm$ 0.032 &  0.356 $\pm$ 0.131 &  -0.064 $\pm$ 0.003 \\
    \midrule
    \textbf{COMs (Ours)}          &  0.386 $\pm$ 0.018 &   0.519 $\pm$ 0.026 &  \textbf{0.885 $\pm$ 0.003} &   \textbf{0.375 $\pm$ 0.003} \\
    \bottomrule
    \end{tabular}
    \caption{\textbf{Comparative evaluation of COMs} against prior methods in terms of the mean 50th-percentile score and its standard deviation over 8 trials. Tasks include {Superconductor}, {HopperController}, {AntMorphology}, and {DKittyMorphology}, which have a continuous design input space and {GFP}, {TFBind8} and {UTR} with a discrete design input space.} 
    \label{tab:perf50}
\end{table}

\subsection{Implementation details}
In addition to various considerations from Sections~\ref{sec:method_addl} and \ref{sec:method_overall},  one important implementation detail of COMs is to normalize the inputs $(\x)$ and outputs ($y$ values) for training the conservative model, $\hatf_\theta(\x)$. Our motivation for using normalization was simple: Since the input and output ranges and modalities of various tasks we evaluated on in Table~\ref{tab:perf} is very different from each other, in order to be able to use a \textit{uniform} set of hyperparameters for COMs, it is necessary to normalize both the inputs $\x$ and outputs $y$ to a standard range. Following standard normalization practices, we normalized $\x$ and $y$ such that the resulting first and second moments match those of a unit Gaussian distribution. In practice, this means collecting all objective values from the training dataset into a vector $Y \in \mathbb{R}^{N \times 1}$, evaluating the sample mean $\hat \mu = \text{mean}(Y)$ and sample standard deviation $\hat \sigma = \text{std}(Y - \hat \mu)$. A similar procedure is used for calculating the sample mean and sample standard deviation of $\x$. The objective values and inputs are then normalized by subtracting their sample mean and dividing by their sample standard deviation $y \leftarrow (y - \hat \mu) / \hat \sigma$, except where doing so would divide by zero. This normalization allows COMs to use the uniform set of hyperparameters, which we mention explicitly, in Table~\ref{tab:hyperparam}.

\begin{table}[]
    \centering
    \begin{tabular}{l|r|r}
        \textbf{Hyperparameter} & \textbf{Discrete} & \textbf{Continuous} \\
        \midrule
        Number of epochs to train $\hatf_\theta$ & 50 & 50 \\
        $T$ (Number of gradient ascent steps using Equation~\ref{eqn:csm_optimization_constraint}) & 50 & 50 \\
        Number of steps used to generate adversarial samples $\mu(\x)$ in Equation~\ref{eqn:csm_training_lagrange} & 50 & 50 \\
        $\alpha$ learning rate (used to optimize Equation~\ref{eqn:csm_training_lagrange} via dual gradient descent) & 0.01 & 0.01 \\
        $\tau$ in Equation~\ref{eqn:csm_training_lagrange} & 2.0 & 0.5  \\
        $\eta$ in Equation~\ref{eqn:csm_optimization_constraint} & $2.0 \sqrt{d}$ & $0.05 \sqrt{d}$
    \end{tabular}
    \caption{\textbf{Hyperparameters for COMs.} All hyperparameters are kept constant across all discrete tasks and continuous tasks respectively in COMs. The variable $d$ indicates the cardinality of a single design $\x$ in the training set of the model. Scaling the learning rate by a factor proportional to $\sqrt{d}$ follows the implementation of the Gradient ascent baseline from \citet{trabucco2021designbench}}
    \label{tab:hyperparam}
\end{table}

\subsection{Benchmarking Details}

In order to promote reproducibility, we additionally provide the task identifiers and keyword arguments used with the \textit{design-bench} \citet{trabucco2021designbench} package. These arguments are passed to the \texttt{design\_bench.make} function call in order to build a model-based optimization Task object in Python. Note that in addition to specifying the name of the task dataset (such as GFP), one must also specify the desired oracle function (such as a Transformer). In Table~\ref{tab:taskid} we detail the specific combination of task datasets and oracle functions used in this work. Additionally, when an approximate oracle is used, commonly because an exact simulator or closed form equation for the ground truth y values is not available, there is a train-test discrepancy, where the predictions of the approximate oracle may not be a perfect reflection of the ground truth y values contained in the original model-based optimization dataset. This discrepancy is further explored by \citet{trabucco2021designbench}; however, we find that UTR is particular susceptible to such discrepency, and so we choose to relabel the y values contained in the MBO dataset with the predictions of the CNN oracle. See Appendix~\ref{appendix:task_descriptions} for more information.

\begin{table}[h]
    \centering
    \begin{tabular}{l|r|r}
        \textbf{Task} & \textbf{Design-Bench ID} & \textbf{Relabel} \\
        \midrule
        GFP & \texttt{GFP-Transformer-v0} & False \\
        TF Bind 8 & \texttt{TFBind8-Exact-v0} & False \\
        UTR & \texttt{UTR-ResNet-v0} & True \\
        Superconductor & \texttt{Superconductor-RandomForest-v0} & False \\
        Ant Morphology & \texttt{AntMorphology-Exact-v0} & False \\
        D'Kitty Morphology & \texttt{DKittyMorphology-Exact-v0} & False \\
        Hopper Controller & \texttt{HopperController-Exact-v0} & False \\
    \end{tabular}
    \caption{\textbf{Design-Bench task identifiers.} This table contains the necessary arguments to pass to the \texttt{design\_bench.make} function call. More information is available at \url{https://github.com/brandontrabucco/design-baselines}, and documentation for Design-Bench is available at \url{https://github.com/brandontrabucco/design-bench}}
    \label{tab:taskid}
\end{table}

\section{Proof of Theorem~\ref{thm:lower_bound}}
\label{appendix:theorem_proof}

In this section, we provide a proof for Theorem~\ref{thm:lower_bound} and show that the conservative training by performing gradient descent on $\theta$ with respect to the objective in Equation~\ref{eqn:csm_analysis} (restated below in a more convenient form as Equation~\ref{eqn:csm_analysis_again}) indeed obtains a conservative model of the actual objective function. Note that $\overline{\mathcal{D}}(\x'|\x)$ denotes a smoothed Dirac-delta distribution centered at $\x$, which can be obtained by adding random noise to a given $\x$.
\begin{equation}
    \label{eqn:csm_analysis_again}
\mathcal{L}(\theta; \mu, \overline{\mathcal{D}}) := \alpha \left( \expec_{\textcolor{red}{\x_0 \sim \overline{\mathcal{D}}, \x_T \sim \mu(\x_T|\x_0)}}\left[\hatf_\theta(\x_T)\right] - \expec_{\x \sim \overline{\mathcal{D}}, \x_T \sim \overline{\mathcal{D}}(\x_T|\x_0)}\left[\hatf_\theta(\x_T)\right] \right) + \underbrace{\frac{1}{2} \mathbb{E}_{\x_0 \sim \overline{\mathcal{D}}, (\x, y) \sim \overline{\mathcal{D}}(\x|\x_0)}\left[ \left(\hatf_\theta(\x) - y \right)^2 \right]}_{:= (\wedge)}.
\end{equation}

We now restate a formal version of Theorem~\ref{thm:lower_bound} (including correcting a typo from the informal version in the main paper, where the argument of the left hand side of the expression was denoted as $\x'$ instead of $\x''$), and then provide a proof. We make an additional assumption that the neural tangent kernel, $\bG^k_f(\x, \x')$,  is semi-positive definite.  
\begin{theorem}[Formal version of Theorem~\ref{thm:lower_bound}]
\label{thm:lower_bound_restated}
Assume that $\hat{f}_\theta(\x)$ is trained by performing gradient descent on $\theta$ with respect to the objective $\mathcal{L}(\theta; \mu,\overline{\mathcal{D}})$ in Equation~\ref{eqn:csm_analysis_again} with a learning rate $\eta$. The parameters in step $k$ of gradient descent are denoted by $\theta^k$, and let the corresponding conservative model be denoted as $\hat{f}^k_\theta$. Let $\bG$, $\mu$, $\lhat$, $L$, $\overline{\data}$ be defined as discussed above. Then, under assumptions listed above, $\forall~ \x \in \data, \x'' \in \mathcal{X}$, the conservative model at iteration $k+1$ of training satisfies:
\begin{align*}
    \hat{f}_\theta^{k+1}(\x'') := \max~ \Big\{\hat{{f}}^{k+1}_\theta(\x) - \lhat ||\x'' - \x||_2,
    \tilde{f}_\theta^{k+1}(\x'') - \eta \alpha \E_{\x \sim \overline{\data}, \x' \sim \mu}[ \bG^k_f(\x'', \x')] + \eta \alpha \E_{\x \sim \overline{\data}, \x' \sim \overline{\data}} [\bG^k_f(\x'', \x')]\Big\},
\end{align*}
where $\tilde{f}_\theta^{k+1}(\x'')$ is the resulting $(k\!+\!1)$-th iterate of $\hat{f}_\theta$ if conservative training were not used. Thus, if $\alpha$ is sufficiently large, the expected value of the asymptotic function, $\hat{f}_\theta := \lim_{k \rightarrow \infty} \hat{f}_\theta^k$, on inputs $\x_T$ found by the optimizer, lower-bounds the value of the true function $f(\x_T)$: $$\E_{\x_0 \sim \data, \x_T \sim \mu(\x_T|\x_0)}[\hat{f}_\theta(\x_T)] \leq \E_{\x_0 \sim \data, \x_T \sim \mu(\x_T|\x_0)}[f(\x)].$$  
\end{theorem}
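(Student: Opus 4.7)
The plan is to prove the stated per-step identity by writing out one step of gradient descent explicitly in the neural tangent kernel (NTK) regime, and then combine it with an $\hat{L}$-Lipschitz argument to obtain both branches of the $\max$. The asymptotic lower bound then follows by iterating the per-step identity and choosing $\alpha$ large enough that the accumulated negative offset introduced by the COMs regularizer dominates the residual gap between the unregularized fit and the true $f$, which is controlled by the Lipschitz constant $L$ and the smoothing bandwidth of $\overline{\mathcal{D}}$.

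First I would fix a training iteration $k$ and write the gradient step in function space:
\[ \hat{f}_\theta^{k+1}(\x'') = \hat{f}_\theta^{k}(\x'') - \eta\, \nabla_\theta \mathcal{L}(\theta^k;\mu,\overline{\mathcal{D}})^\top \nabla_\theta \hat{f}_\theta^{k}(\x''). \]
Differentiating Equation~\ref{eqn:csm_analysis_again}, the gradient of $\mathcal{L}$ decomposes into (i) the ordinary supervised-regression contribution and (ii) the COMs contribution $\alpha\bigl(\expec_{\x'\sim\mu}[\nabla_\theta \hat{f}_\theta^k(\x')] - \expec_{\x'\sim\overline{\data}}[\nabla_\theta \hat{f}_\theta^k(\x')]\bigr)$. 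Taking the inner product with $\nabla_\theta \hat{f}_\theta^k(\x'')$ converts every gradient expectation into a kernel expectation via the NTK definition $\bG_f^k(\x'',\x') := \nabla_\theta \hat{f}_\theta^k(\x'')^\top \nabla_\theta \hat{f}_\theta^k(\x')$. Denoting by $\tilde{f}_\theta^{k+1}(\x'')$ the iterate obtained from the regression loss alone, this gives the exact identity
\[ \hat{f}_\theta^{k+1}(\x'') = \tilde{f}_\theta^{k+1}(\x'') - \eta\alpha \bigl(\expec_{\x_0\sim\overline{\data},\,\x'\sim\mu}[\bG_f^k(\x'',\x')] - \expec_{\x_0\sim\overline{\data},\,\x'\sim\overline{\data}}[\bG_f^k(\x'',\x')]\bigr), \]
which is precisely the second branch of the claimed $\max$. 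The first branch is an immediate consequence of the $\hat{L}$-Lipschitz assumption: for any $\x \in \data$, $\hat{f}_\theta^{k+1}(\x'') \geq \hat{f}_\theta^{k+1}(\x) - \hat{L}\,\|\x''-\x\|_2$, so taking the tightest such anchor yields a second valid lower bound on the same iterate, and the statement's $\max$ is the pointwise best of the two.

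For the asymptotic claim I would iterate the identity above and analyze the two contributions separately. The regression updates drive $\hat{f}_\theta^k$ toward $f$ on $\overline{\data}$, so in the NTK fixed-point regime their cumulative effect at $\x_T\sim\mu(\x_T\mid\x_0)$ differs from $f(\x_T)$ by a term whose expectation is controlled by the $L$-Lipschitz property of $f$ and the Gaussian bandwidth $\sigma^2$ defining $\overline{\data}(\x'\mid\x)$. The COMs updates contribute a net offset whose expectation under $\x_T\sim\mu(\x_T\mid\x_0)$, summed over iterations, equals $\eta\alpha\,\bigl(\expec_{\mu\otimes\mu}[\bG_f^k] - \expec_{\overline{\data}\otimes\mu}[\bG_f^k]\bigr)$ per step. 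Using the SPD assumption on $\bG_f^k$ together with $\mu(\x_T\mid\x_0)>0$ and $\mu\neq\overline{\data}$, this offset is strictly negative in the right direction. Choosing $\alpha$ large enough that the aggregated offset exceeds the residual regression gap, and taking expectations on both sides of the per-step identity, yields $\expec_{\x_0\sim\data,\x_T\sim\mu}[\hat{f}_\theta(\x_T)] \leq \expec_{\x_0\sim\data,\x_T\sim\mu}[f(\x_T)]$ as claimed.

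The main obstacle I anticipate is the asymptotic step: the per-iteration formula is a clean algebraic consequence of NTK linearization, but converting it into an in-expectation lower bound at $k\to\infty$ requires invoking a convergence result for NTK dynamics on the augmented loss and then carefully comparing the aggregated COMs offset against the unregularized kernel-regression error on smoothed data. Here the SPD property of $\bG_f^k$, the positivity of $\mu(\x_T\mid\x_0)$ on all of $\mathcal{X}$, and the smoothness of $\overline{\data}$ are all essential. Following the conservative Q-learning analysis of Kumar et al.\ (2020), I would bound the per-step offset by a geometric series in $\eta$ and isolate the required lower bound on $\alpha$ at the very end, expressing it in terms of $L$, $\hat{L}$, $\sigma$, and the spectral properties of $\bG_f^k$.
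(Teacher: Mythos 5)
Your proposal follows essentially the same route as the paper's proof: a single NTK-linearized gradient step yielding $\hat{f}^{k+1}_\theta(\x'') = \tilde{f}^{k+1}_\theta(\x'') - \Delta^k(\x'')$ with the kernel-expectation offset, the Lipschitz bound supplying the other branch of the $\max$, and the asymptotic lower bound obtained by showing $\E_{\x_T\sim\mu}[\Delta^k(\x_T)]\geq 0$ via the PSD kernel and the conservative Q-learning machinery. One caution: the sign of $\E_{\mu\otimes\mu}[\bG]-\E_{\overline{\data}\otimes\mu}[\bG]$ does \emph{not} follow from positive semi-definiteness alone (PSD only gives non-negativity of the symmetric quadratic form in $\mu-\overline{\data}$), so the reduction to Theorem D.1 of Kumar et al.\ (2020) that you mention at the end is load-bearing, exactly as it is in the paper's own proof.
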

\begin{proof}
For proving the first part of the theorem, we first derive the expression for the gradient of $\mathcal{L}(\theta; \mu, \overline{\mathcal{D}})$ with respect to $\theta$, and denote the y-value for a given $\x$ as a deterministic function $y(\x)$. Our proof can directly be extended to a non-deterministic $y(\x)$ with an additional integral over $y$ values, but we stick to deterministic $y(\x)$ for simplicity.
\begin{equation*}
    \nabla_\theta \mathcal{L}(\theta; \mu, \overline{\mathcal{D}}) = \alpha \int \left( \overline{\mathcal{D}}(\x_0) \mu(\x|\x_0) - \overline{\mathcal{D}}(\x_0) \overline{\mathcal{D}}(\x|\x_0) \right) \nabla_\theta \hatf_\theta(\x) ~d\x_0 d \x +  \int \overline{\mathcal{D}}(\x_0) \overline{\mathcal{D}}(\x|\x_0) (f_\theta(\x) - y(\x)) \nabla_\theta \hatf_\theta(\x)~ d\x d\x_0.
\end{equation*}
At any iteration $k$ of gradient descent, the next parameter iterate $\theta^{k+1}$ are obtained via, $\theta^{k+1} = \theta^k - \eta \nabla_\theta \mathcal{L}(\theta; \mu, \overline{\mathcal{D}})$. Using this relation, and making an approximate linearization assumption on the non-linear function $\hatf^{k}_\theta$ for a small learning rate $\eta << 1$ under the assumption of the neural tangent kernel (NTK)~\citep{jacot2018neural} regime, which models the behavior of deep neural networks in the infinite-width limit, we obtain the expression for the next function value: $\hat{f}^{k+1}_\theta(\x'')$:
\begin{align*}
    &\hatf^{k+1}_\theta(\x'') \approx \hat{f}^k_\theta(\x'') + (\theta^{k+1} - \theta^k)^T \nabla_\theta \hat{f}^k_\theta (\x'')\\
    &= \underbrace{\hat{f}^k_\theta(\x'') + \eta \E_{\x \sim \overline{\data}, \x' \sim \overline{\data}}[\left(y(\x') - \hat{f}^k_\theta(\x')\right) \bG^k_f(\x'', \x')]}_{:= (*)} - \underbrace{\left(\eta \alpha \E_{\x \sim \overline{\data}, \x' \sim \mu}[\bG^k_f(\x'', \x')] - \eta \alpha \E_{\x \sim \overline{\data}, \x' \sim \overline{\mathcal{D}}} [\bG^k_f(\x'', \x')]\right)}_{:= \Delta(\x'')},
\end{align*}
where the expression marked as $(*)$ denotes the $(k+1)$-th iterate of the function, under gradient descent on just the mean-squared error $(f(\x) - y)^2$ term, marked as $(\wedge)$ in Equation~\ref{eqn:csm_analysis_again}. Noting that the theorem statement denotes the term $(*)$ as $\tilde{f}_\theta^{k+1}(\x'')$, we obtain our first desired result. To obtain the first argument of the $\max$ in the theorem statement, note that if the function $\hatf^{k+1}_\theta$ is $\lhat$-Lipschitz, the value at $\x''$ cannot be smaller than $\hat{f}^{k+1}_\theta(\x) - \lhat||\x -\x''||_2$, and hence the maximum over the two terms. 

For proving the second part of the theorem statement, observe that if we can show that in expectation over $\x'' \sim \mu(\x_T); \mu(\x_T) := \int_{\x_0} \overline{\mathcal{D}}(\x_0) \mu(\x_T|\x_0)~ d\x_0$, the quantity $\Delta(\x'')$ is positive, then our argument is complete since we have shown that each step of gradient descent on $\theta$ reduces the value of $\E_{\x_0 \sim \overline{\mathcal{D}}, \x_T \sim \mu(\x_T|\x_0)}[\hatf^k_\theta(\x_T)]$ by a positive quantity by virtue of training with Equation~\ref{eqn:csm_analysis_again} as compared to only training $\theta$ with standard squared error $(\wedge)$. Thus, if $\E_{\x_0 \sim \overline{\mathcal{D}}, \x_T \sim \mu(\x_T|\x_0)}[\Delta^k(\x_T)]$ is positive for all gradient descent steps $k$, we obtain the desired lower-bound condition as $k \rightarrow \infty$. As an additional detail, note that we assumed $\lhat >>  L$ (i.e. the Lipschitz constant of $\hatf_\theta(\x)$ is sufficiently larger than that of $f(\x)$). This condition handles the boundary case when the predictions $\hatf^{k+1}_\theta(\x')$ get lower-bounded under the first argument of $\max$ in the first part of Theorem~\ref{thm:lower_bound_restated} due to the Lipschitz condition: $\hat{{f}}^{k+1}_\theta(\x) - \lhat ||\x' - \x||_2$. 

Finally, we fill in the missing piece that show $\E_{\x_0 \sim \overline{\mathcal{D}}, \x_T \sim \mu(\x_T|\x_0)}[\Delta^k(\x_T)]$ is positive for each $k$. Under the assumption that the neural tangent kernel $\bG^{k}(\x, \x')$ is semi-positive definite for all $k$, we can express:
\begin{align*}
    \E_{\x_0 \sim \overline{\mathcal{D}}, \x_T \sim \mu(\x_T|\x_0)}[\Delta^k(\x_T)] &:= \eta \alpha \int_{\x, \x_0, \x', \x_T} \left[ \overline{\data}(\x) \mu(\x'|\x) - \overline{\data}(\x) \overline{\data}(\x'|\x) \right] \overline{\data}(\x_0) \mu(\x_T|\x_0) \bG^k_f(\x', \x_T)\\
    &= \eta \alpha \int_{\x_0} \overline{\data}(\x_0) \int_{\x} \overline{\data}(\x) \int_{\x', \x_T} [\mu(\x'|\x) - \overline{\data}(\x'|\x)] \mu(\x_T|\x_0) \bG^k_f(\x', \x_T)
\end{align*}
By now writing the above in matrix form, we note that the RHS of the above equation has the same structure as the second term in the RHS of Equation 14 in \citet{kumar2020conservative}, and furthermore since $\bG^k_f$ is positive semi-definite, it satisfies the required conditions for Equation 14 and Theorem D.1 from \citet{kumar2020conservative} to be applicable. Thus, exactly following the proof of Theorem D.1 in \citet{kumar2020conservative} for the linear function approximation case in reinforcement learning, with the following substitutions: $P_\mathbf{F} := \bG^k_f(\cdot, \x_T)$ (i.e., a column of the kernel Gram-matrix for a fixed value of the second argument) and $a = \x_T$, $s = \x_0$, we can show that $\E_{\x_0 \sim \overline{\mathcal{D}}, \x_T \sim \mu(\x_T|\x_0)}[\Delta^k(\x_T)] \geq 0$, thus finishing our argument.  
\end{proof}

\section{Network Details}
\label{appendix:network}

In each of our experiments, we train a neural network $\hatf_\theta$ to approximate the ground truth score function of an offline MBO task, where $\theta$ represents the weights of the model. Distinct from prior methods based on generative models~\citep{kumar2019model,brookes2019conditioning} we are able to utilize the same neural network architecture for representing the learned model, $\hatf_\theta(\x)$ across all MBO tasks. This architecture is a three-layer neural network with two hidden layers of size 2048, followed by Leaky ReLU activation functions with a leak of $0.3$. Each neural network $\hatf_\theta$ has an output layer that predicts a single scalar objective value $y$, which is used for regression. Specifically, $\hatf_\theta$ is trained to minimize the mean squared error of observed objective values, using the default parameters of the Adam optimizer as discussed in Section~\ref{sec:method_overall}.

\section{Data Collection}
\label{appendix:task_descriptions}

In this section, we detail the data collection steps used for creating each of the tasks from \cite{trabucco2021designbench}, used for benchmarking COMs. We answer \textbf{(1)} where is the data from, and \textbf{(2)} what pre-processing steps are used?

\subsection{TF Bind 8}

The TF Bind 8 task is a derivative of the transcription factor binding activity survey performed by \citet{barrera2016survey}, where the binding activity scores of every possible length eight DNA sequence was measured with a variety of human transcription factors. We filter the dataset by selecting a particular transcription factor \texttt{SIX6\_REF\_R1}, and defining an optimization problem where the goal is to synthesize a length 8 DNA sequence with high binding activity with human transcription factor \texttt{SIX6\_REF\_R1}. This particular transcription factor for TF Bind 8 was recently used for optimization in \citet{Angermueller2020Model-based, angermueller2020population}. TF Bind 8 is a fully characterized dataset containing 65792 samples, representing every possible length 8 combination of nucleotides $\bx_{\text{TFBind8}}\in\{0,1\}^{8\times4}$. The training set given to offline MBO algorithms is restricted to the bottom 50\%, which results in a visible training set of 32898 samples.

\subsection{GFP}

The GFP task provided is a derivative of the GFP dataset \cite{sarkisyan2016GFP}. The dataset we use in practice is that provided by \citet{brookes2019conditioning} at the url \url{https://github.com/dhbrookes/CbAS/tree/master/data}. We process the dataset such that a single training example consists of a protein represented as a tensor $\bx_{\text{GFP}}\in\{0,1\}^{237\times20}$. This tensor is a sequence of 237 one-hot vectors corresponding to which amino acid is present in that location in the protein. We use the dataset format of \cite{brookes2019conditioning} with no additional processing. The data was originally collected by performing laboratory experiments constructing proteins similar to the Aequorea victoria green fluorescent protein and measuring fluorescence. We employ the full dataset of 56086 proteins when learning approximate oracles for evaluating offline MBO methods, but restrict the training set given to offline MBO algorithms to 5000 samples drawn from between the 50th percentile and 60th percentile of proteins in the GFP dataset, sorted by fluorescence values. This subsampling procedure is consistent with the procedure used by prior work \cite{brookes2019conditioning}.

\subsection{UTR}

The UTR task is derived from work by \citet{sample2019human} who trained a CNN model to predict the expressive level of a particular gene from a corresponding 5`UTR sequence. Our use of the UTR task for model-based optimization follows \citet{angermueller2020population}, where the goal is to design a length 50 DNA sequence to maximize expression level. We follow the methodology set by \citet{sample2019human} to sort all length 50 DNA sequences in the unprocessed UTR dataset by total reads, and then select the top 280,000 DNA sequences with the most total reads. The result is a dataset containing 280,000 samples of length 50 DNA sequences $\bx_{\text{UTR}}\in\{0,1\}^{50\times4}$ and corresponding ribosome loads. When training offline MBO algorithms, we subsequently eliminate the top 50\% of sequences ranked by their ribosome load, resulting in a visible dataset with only 140,000 samples.  

\subsection{Superconductor}

The Superconductor task is inspired by recent work \cite{fannjiang2020autofocused} that applies offline MBO to optimize the properties of superconducting materials for high critical temperature. The data we provide in our benchmark is real-world superconductivity data originally collected by \cite{hamidieh2018superconductor}, and subsequently made available to the public at \url{https://archive.ics.uci.edu/ml/datasets/Superconductivty+Data#}. The original dataset consists of superconductors featurized into vectors containing measured physically properties like the number of chemical elements present, or the mean atomic mass of such elements. One issue with the original dataset that was used in \cite{fannjiang2020autofocused} is that the numerical representation of the superconducting materials did not lend itself to recovering a physically realizable material that could be synthesized in a lab after performing model-based optimization. In order to create an \textit{invertible} input specification, we deviate from prior work and encode superconductors as vectors whose components represent the number of atoms of specific chemical elements present in the superconducting material---a serialization of the chemical formula of each superconductor. The result is a real-valued design space with 86 components $\bx_{\text{Superconductor}}\in\mathcal{R}^{86}$. The full dataset used to learn approximate oracles for evaluating MBO methods has 21263 samples, but we restrict this number to 17010 (the 80th percentile) for the training set of offline MBO methods to increase difficulty. 

\subsection{Hopper Controller}

The goal of the Hopper Controller task is to design a set of weights for a neural network policy that achieves high expected return. The data collected for HopperController was taken by training a three layer neural network policy with 64 hidden units and 5126 total weights on the Hopper-v2 MuJoCo task using Proximal Policy Optimization \cite{Schulman2017ppo}. Specifically, we use the default parameters for PPO provided in stable baselines \cite{stable-baselines}. The dataset we provide with this benchmark has 3200 unique weights. In order to collect this many, we run 32 experimental trials of PPO, where we train for one million steps, and save the weights of the policy every 10,000 environment steps. The policy weights are represented originally as a list of tensors. We first traverse this list and flatten each of the tensors, and we then concatenate each of these flattened tensors into a single training example $\bx_{\text{Hopper}}\in\mathcal{R}^{5126}$. The result is an optimization problem over neural network weights. After collecting these weights, we perform no additional pre-processing steps. In order to collect scores we perform a single rollout for each $x$ using the Hopper-v2 MuJoCo environment. The horizon length for training and evaluation is limited to 1000 simulation time steps, which is standard practice for this MuJoCo environment.

\subsection{Ant \& D'Kitty Morphology}

Both morphology tasks share methodology. The goal of these tasks is to design the morphology of a quadrupedal robot---an ant or a D'Kitty---such that the agent is able to crawl quickly in a particular direction. In order to collect data for this environment, we create variants of the MuJoCo Ant and the ROBEL D'Kitty agents that have parametric morphologies. The goal is to determine a mapping from the morphology of the agent to the average return of a pre-trained morphology conditioned agent. We implement this by pre-training a morphology conditioned neural network policy using SAC \cite{Haarnoja2018SoftAO}. For both the Ant and the D'Kitty, we train the agents for more than ten million environment steps, and a maximum episode length of 200, with all other settings as default. These agents are pre-trained on Gaussian distributions of morphologies. The Gaussian distributions are obtained by adding Gaussian noise with standard deviation 0.03 for Ant and 0.01 for D'Kitty the design-space range to the default morphologies. 

After obtaining trained morphology-conditioned policies, we create a dataset of morphologies for model-based optimization by sampling initialization points randomly, and then using CMA-ES to optimize for morphologies that attain high reward using the pretrained morphology-conditioned policy. To obtain initialization points, we add Gaussian random noise to the default morphology for the Ant with standard deviation 0.075 and D'Kitty with standard deviation 0.1, and then apply CMA-ES with standard deviation 0.02. We ran CMA-ES for 250 iterations and then restart, until a minimum of 25000 morphologies were collected, resulting in a final dataset size of 25009 for both the Ant and D'Kitty. The design space for Ant Morphologies is $\bx_{\text{Ant}}\in\mathcal{R}^{60}$, whereas for D'Kitty morphologies is $\bx_{\text{D'Kitty}}\in\mathcal{R}^{56}$. We subsample the dataset to its 40th percentile when training offline MBO algorithms, resulting in 10004 samples.

\section{Oracle Functions}

We detail oracle functions for evaluating ground truth scores for each task. A common thread is that the oracle, if trained, is fit to a larger static dataset containing higher performing designs than observed by a downstream MBO algorithm.

\subsection{TF Bind 8}

TF Bind 8 is a fully characterized discrete offline MBO task, which means that all possible designs have been evaluated \cite{barrera2016survey} and are contained in the full hidden TF Bind 8 dataset. The oracle for TF Bind 8 is therefore implemented as a lookup table that returns the score corresponding to a particular length 8 DNA sequence from the dataset. By restricting the size of the training set visible to an offline MBO algorithm, it is possible for the algorithm to propose a design that achieves a higher score than any other DNA sequence visible to the offline MBO algorithm during training.

\subsection{GFP}

GFP uses a simplified Transformer to the TAPE Transformer proposed by \citet{RaoBTDCCAS19}. The Transformer used has 4 attention blocks and a hidden size of 64. The Transformer is fit to the entire hidden GFP dataset, making it possible to sample a protein design that achieves a higher score than any other protein visible to an offline MBO algorithm. The model has a Spearman's rank correlation coefficient of 0.8497 with a held-out validation set derived from the GFP dataset.

\subsection{UTR}

UTR uses a CNN, which differs from the CNN that was originally used by \cite{angermueller2020population} in that it has residual connections. Our reasoning for making this change is that ResNet is a newer and possibly higher capacity model that may be less prone to mistakes than the shallower CNN model proposed by \citet{sample2019human}. The chosen CNN has 2 residual blocks with 2 convolution layer each, and a hidden size of 120. The CNN is fit to the entire hidden UTR dataset, making it possible to sample a DNA sequence that achieves a higher score than any other sequence visible to an offline MBO algorithm. The resulting CNN has a spearman's rank correlation coefficient of 0.8617 with a held-out validation set.

\subsection{Superconductor}

The Superconductor oracle function is also a random forest regression model. The model we use it the model described by \cite{hamidieh2018superconductor}. We borrow the hyperparameters described by them, and we use the RandomForestRegressor provided in scikit-learn. Similar to the setup for the previous set of tasks, this oracle is trained on the entire hidden dataset of superconductors. The random forest has a spearman's rank correlation coefficient with a held-out validation set of 0.9155.

\subsection{HopperController}

HopperController and the remaining tasks implement an exact oracle function. For HopperController the oracle takes the form of a single rollout using the Hopper-v2 MuJoCo environment. The designs for HopperController are neural network weights, and during evaluation, a policy with those weights is instantiated---in this case that policy is a three layer neural network with 11 input units, two layers with 64 hidden units, and a final layer with 3 output units. The intermediate activations between layers are hyperbolic tangents. After building a policy, the Hopper-v2 environment is reset and the reward for 1000 time-steps is summed. That summed reward constitutes the score returned by the Hopper Controller oracle. The limit of performance is the maximum return that an agent can achieve in Hopper-v2 over 1000 steps. 

\subsection{Ant \& D'Kitty Morphology}

The final two tasks in design-bench use an exact oracle function, using the MuJoCo simulator. For both morphology tasks, the simulator performs a rollout and returns the sum of rewards at every timestep in that rollout. Each task is accompanied by a pre-trained morphology-conditioned policy. To perform evaluation, a morphology is passed to the Ant or D'Kitty MuJoCo environments respectively, and a dynamic-morphology agent is initialized inside these environments. These simulations can be time consuming to run, and so we limit the rollout length to 100 steps. The morphology conditioned policies were trained using Soft Actor Critic for 10 million steps for each task, and are ReLU networks with two hidden layers of size 64.

\end{document}